\documentclass[twoside]{article}

\usepackage{amsmath,amssymb}
\usepackage{algpseudocode}
\usepackage{amsthm}
\usepackage{xcolor}
\usepackage{graphicx}
\usepackage[ruled,vlined]{algorithm2e}
\usepackage{hyperref}
\newtheorem{proposition}{Proposition}

\usepackage[accepted]{aistats2026}

\usepackage{fancyhdr}
\usepackage[round]{natbib}

\begin{document}

\twocolumn[

\aistatstitle{Active Measurement of Two-Point Correlations}

\aistatsauthor{ Max Hamilton \qquad Daniel Sheldon\qquad Subhransu Maji}

\aistatsaddress{\texttt{\{jmhamilton,sheldon,smaji\}@cs.umass.edu}\\University of Massachusetts, Amherst} ]

\begin{abstract}
Two-point correlation functions (2PCF) are widely used to characterize how points cluster in space. In this work, we study the problem of measuring the 2PCF over a large set of points, restricted to a subset satisfying a property of interest. An example comes from astronomy, where scientists measure the 2PCF of star clusters, which make up only a tiny subset of possible sources within a galaxy. This task typically requires careful labeling of sources to construct catalogs, which is time-consuming. We present a human-in-the-loop framework for efficient estimation of 2PCF of target sources. By leveraging a pre-trained classifier to guide sampling, our approach adaptively selects the most informative points for human annotation. After each annotation, it produces unbiased estimates of pair counts across multiple distance bins simultaneously. Compared to simple Monte Carlo approaches, our method achieves substantially lower variance while significantly reducing annotation effort. We introduce a novel unbiased estimator, sampling strategy, and confidence interval construction that together enable scalable and statistically grounded measurement of two-point correlations in  astronomy datasets. 
\end{abstract}

\section{INTRODUCTION}

The two-point correlation function (2PCF) is a statistical tool used to quantify how points are spatially distributed relative to one another. It has a rich history in astronomy, particularly in the study of the large-scale structure of the universe. Early studies \citep[e.g.,][]{peebles1975jagellonian} applied this technique to analyze how matter---especially galaxies---is distributed across cosmic scales. With the advent of high-resolution telescopes such as Hubble and the James Webb Space Telescope (JWST), 2PCF analysis has been extended to smaller scales, including sources within individual galaxies ~\citep[e.g., star clusters or stars;][]{Lapeer_2026, 2024ApJ...971...32P, 2017ApJ...840..113G,  2017ApJ...842...25G}.

Performing such analyses, however, requires corresponding catalogs of objects, which can be time-consuming to produce. Early galaxy surveys such as the Zwicky~\citep{zwicky1968catalogue} and Lick~\citep{seldner1977lick,shane1967lick} catalogs were monumental efforts---e.g., the Lick catalog involved manually counting approximately one million galaxies from photographic plates~\citep{Lickgalaxycatalogue}. Catalogs of star clusters similarly require classifying sources within galaxies based on their morphology and spectral characteristics. Large teams of astronomers have collaborated to compile such catalogs for several galaxies using data from Hubble surveys \citep[e.g., LEGUS;][]{2017ApJ...841..131A,2015AJ....149...51C,2018MNRAS.473..996M}. However, JWST and future telescopes can resolve far richer detail and vastly more objects, making manual cataloging increasingly challenging.

While AI techniques—particularly computer vision models—have been proposed for automated source classification~\citep{pmcmjas_apj2021,whitmore2021phangs}, astronomers are often reluctant to rely directly on model outputs. These models may be inaccurate or exhibit distribution shifts when applied to new galaxies, leading to biased estimates.

In this work, we introduce a human-in-the-loop framework for estimating the two-point correlation function (2PCF) of target sources using a small amount of labeled data. We assume that the target sources (e.g., star clusters) form a subset of all detected sources. Our method leverages a pre-trained classifier to identify the most informative sources for expert labeling. After each new label is acquired, it updates both an estimate and confidence intervals for pair counts across all distance bins—a key quantity required for 2PCF estimation. This enables astronomers to achieve the desired accuracy with substantially reduced annotation effort. Applied to real data from two galaxies observed in JWST surveys, our framework achieves less than 10\% error in the 2PCF using only 35\% of the objects labeled.

Our approach builds on ideas from subset selection and importance sampling, but adapts them to the specific challenges of the pair-counting problem. We introduce a new unbiased estimator for pair counts, an efficient method for selecting examples to label, and a technique for constructing confidence intervals across multiple separation bins. Together, these contributions enable large-scale, statistically robust 2PCF analyses with reduced manual effort.

Code for this paper is available at: 

\quad {\small\url{https://github.com/cvl-umass/2pcf-measurement}}

\section{RELATED WORK}

Our work is broadly related to methods that combine AI model predictions with human labels to improve estimates. Prediction powered inference~\citep{angelopoulos2023prediction} showed how to estimate a population parameter (e.g., sample mean or regression coefficient) from imperfect machine learning predictions by using a small labeled dataset to estimate model bias and construct confidence intervals. IS-count~\citep{Meng_Liu_Neiswanger_Song_Burke_Lobell_Ermon_2022} uses importance sampling to select a small subset based on covariates, while DISCount~\citep{Perez_Maji_Sheldon_2024} proposed using detector outputs as covariates to improve estimation accuracy. Recently, active measurement~\citep{hamilton2025active} combined these ideas with iterative model fine-tuning and adaptive importance sampling to reduce estimation error. All these methods, however, focus on estimating first-order statistics of the data (e.g., counts) and are not easily extended to pairwise measurements.

\cite{perez2024human} proposed an approach for producing unbiased estimates of connected components in a graph using a pairwise similarity measure derived from a classifier together with human labels. The approach is based on nested importance sampling, where nodes and edges are sampled using distributions derived from the pairwise similarity. This method was used to estimate population sizes from a collection of images of animals using a re-identification model. Similar to this work, we frame our problem as the task of counting edges that satisfy certain properties in a graph (see~\S~\ref{sec:setup}). However, there are significant differences both in the estimand (e.g., we are counting edges) and in the nature of the labeling task (e.g., labeling vertices vs. edges). As a result, a similar nested Monte Carlo approach performs poorly, and instead, we propose a novel set-based estimator that allows us to estimate the 2PCF more efficiently.

Our approach is related to the literature on subset selection, where the goal is to select a representative subset of a larger collection, often to improve training efficiency or performance. Combining subset selection with importance sampling has been explored in methods such as IWeS~\citep{citovsky2023leveraging}, which determines importance based on the entropy of model predictions. In contrast, our approach prioritizes subsets with large counts to reduce the variance of estimated totals. While we also use an AI model to assign importance weights, our primary goal is accurate estimation of counts rather than improved model accuracy.

Finally, our model builds on computer vision models for source classification in Astronomy. For example, data collected from the Galaxy Zoo project~\citep{raddick2009galaxy} has resulted in models~\citep{banerji2010galaxy} to classify galaxy types such as spirals, ellipticals, and so on. More recently, models such as StarcNet~\citep{pmcmjas_apj2021} and others~\citep{whitmore2021phangs} have been applied to classify source types within high-resolution images of galaxies acquired from the Hubble Space Telescope. While these models have accelerated astronomical analysis, they are far from perfect. For example, StarcNet, trained on a catalog of 50 galaxies from the LEGUS survey consisting of over 11k labeled sources, achieves only 80\% accuracy on the task of recognizing star clusters among other bright sources. Models trained on small, existing catalogs for James Webb Space Telescope (JWST) are even less accurate, yet the need to automate analysis is greater, as JWST can detect many more galaxies and sources within them. This motivates our effort of estimating the 2PCF with few expert labels, guided by model predictions.

\section{SETUP}\label{sec:setup}

\subsection{Two Point Correlation Function}

We adopt the angular two-point correlation function $\omega(\theta)$, which describes the excess probability, relative to a uniform random distribution, of finding two points separated by an angle $\theta$ at a given physical scale. This probability, measured within a solid angle $\delta \Omega$, is given by
\begin{equation}
    \delta P = N (1+\omega(\theta)]\delta \Omega,
\end{equation}
where $N$ is the average density of the point sample.

The 2PCF has been used in prior work to quantify the degree of clustering at different spatial scales in galaxies. For a uniformly distributed random sample, the 2PCF is flat, i.e., $1+\omega(\theta)=1$. For samples that exhibit clustering at small scales, we expect $1+\omega(\theta)>1$ at small $\theta$, indicating an excess of nearby pairs relative to random. At larger separations, one may instead observe $1+\omega(\theta)<1$, indicating fewer pairs than expected under a uniform distribution.

Empirically, $\omega(\theta)$ is estimated using a catalog of ``true" points ${\cal D}$ and a catalog of points drawn from a uniform random distribution ${\cal R}$. A simple estimator of the two-point correlation function is\footnote{An improved estimator was proposed by~\cite{1993ApJ...412...64L}, but we use the above for simplicity.}

\begin{equation}\label{eq:simple}
\omega(\theta) = \frac{DD(\theta)}{RR(\theta)}-1,
\end{equation}
where $DD(\theta)$ denotes the number of pairs in ${\cal D}$ separated by $\theta$, and $RR(\theta)$ denotes the corresponding number of pairs in ${\cal R}$, with both quantities normalized by the total number of possible pairs in each catalog:
\begin{equation}
    DD(\theta) = \frac{N_{pairs}^{\cal D}(\theta)}{N_D N_D}\text{~and~} RR(\theta) = \frac{N_{pairs}^{\cal R}(\theta)}{N_R N_R}.
\end{equation}

In practice, $\theta$ is discretized into a sequence of bins, and the pair counts are accumulated within each bin. Following \cite{Lapeer_2026}, we use logarithmically spaced intervals spanning from the resolution limit to the full field of view (FOV), while excluding bins for which insufficient data are available.

\subsection{Problem Statement} 
Our goal is to estimate the two-point correlation function for a set of target sources, given a larger set of sources that may be either targets or background. Formally, we are given $n$ points $S = \{\mathbf{s}_i\}_{i=1}^n$, each associated with an unknown label $y_i \in \{0, 1\}$ indicating whether it is a target (e.g., a star cluster) or a background source. Note that $RR(\theta)$ does not depend on the labels, so in what follows we focus on estimating $N_{\mathrm{pairs}}^{\cal D}(\theta)$, the number of pairs in which both points belong to the target class within each distance bin, in order to compute $\omega$ from Eq.~\ref{eq:simple}. Estimating the total number of targets $N_D$ is also required in principle, but this is straightforward compared to the pair counts.

In the active measurement setting, we assume access to a classifier $\hat{y_i} = h(\mathbf{s}_i)$ that provides noisy estimates of the probability that a point is a target. Our objective is to estimate $N_{\mathrm{pairs}}^{\cal D}(\theta)$ using the classifier's predictions, augmented with a small number of human-labeled examples.

\section{METHOD}
We frame the pairwise counting task as an edge-counting problem. Each source is represented as a vertex in a graph. For each distance bin, we draw an edge between two vertices whenever the distance between their corresponding sources falls within that bin. An edge is considered ``true" if both of its vertices are targets. Our analysis therefore focuses on counting the number of true edges within each distance bin. In the derivation that follows, we will focus on estimating the count in a single distance bin.

\subsection{Subset Estimator}
For each pair of vertices $(u,v)$ let
\begin{equation}
f(u,v)=
\begin{cases}
1 & \text{if edge $(u,v)$ is true}\\
0 & \text{otherwise.}
\end{cases}
\end{equation}
The total number of true edges is therefore
\begin{equation}
f(S) = \sum_{\substack{\{u,v\}\subseteq S\\ u\neq v}} f(u,v).
\end{equation}
A naive approach is to sample individual edges by selecting their corresponding pairs of vertices. However, this performs poorly because it ignores the contextual information that a labeled vertex shares with all its neighbors. Instead, our method samples entire subsets of vertices, using every edge within a subset to construct our estimate

Now, consider vertex-subsets $S_k \subseteq S$ of size $k$.  
We define the number of true edges inside a subset by
\begin{equation}
f(S_k) = \sum_{\substack{\{u,v\}\subseteq S_k\\ u\neq v}} f(u,v).
\end{equation}

Observe that each unordered pair $\{u,v\}\subseteq S$ appears in exactly $\binom{n-2}{k-2}$ distinct subsets of size $k$.  
Hence,
\begin{equation}
\sum_{\substack{S_k \subseteq S}} f(S_k) 
= \binom{n-2}{k-2} \sum_{\substack{\{u,v\}\subseteq S\\ u\neq v}} f(u,v) 
= \binom{n-2}{k-2} f(S).
\end{equation}

Therefore,
\begin{equation}
f(S) = \frac{1}{\binom{n-2}{k-2}} \sum_{\substack{S_k \subseteq S}} f(S_k).
\label{eq:total}
\end{equation}

\subsection{Monte Carlo Derivation}\label{est:mc}

Since there are $\binom{n}{k}$ vertex subsets of size $k$, we can rewrite this as an expectation with respect to the uniform distribution $U$ over subsets of size $k$:
\begin{align}
f(S) 
&= \frac{1}{\binom{n}{k}} \sum_{\substack{S_k \subseteq S}} 
  \frac{\binom{n}{k}}{\binom{n-2}{k-2}} f(S_k) \\
&= \mathbb{E}_{S_k \sim U} \left[ \frac{\binom{n}{k}}{\binom{n-2}{k-2}} f(S_k) \right].
\end{align}

Simplifying,
\begin{equation}
f(S) = \mathbb{E}_{S_k \sim U} \left[ \frac{n(n-1)}{k(k-1)} \, f(S_k) \right].
\end{equation}

This gives the subset Monte Carlo estimator:
\begin{equation}
\hat{f}_{\text{MC}}(S) = \frac{n(n-1)}{k(k-1)} f(S_k), \quad S_k \sim U.
\end{equation}

An additional observation we can make is that we can calculate the mean count over subsets if we know the mean count over pairs. By the definition of $f(S)$,

\begin{align}
f(S) &= \sum_{\substack{\{u,v\}\subseteq S \\ u\neq v}} f(u,v) \notag \\
&= \frac{n(n-1)}{2}\,\mathbb{E}_{u,v}[f(u,v)] \notag \\
&= \mathbb{E}_{S_k \sim U}\!\left[\frac{n(n-1)}{k(k-1)}\,f(S_k)\right]. \label{eq:subset-mean}
\end{align}

Thus we have the following relationship,

\begin{equation}
\frac{k(k-1)}{2}\,\mathbb{E}_{u,v}[f(u,v)] = 
\mathbb{E}_{S_k \sim U}[f(S_k)]. \label{eq:mean-relationship}
\end{equation}

\subsection{Importance Sampling Derivation}
Since we have access to a classifier, we can reduce the variance of our estimator by using importance sampling. Let $q(S_k)$ be our proposal distribution over k-subsets. Assuming we can draw samples from this distribution, then we can rewrite the total count as,

\begin{equation}
f(S) = \mathbb{E}_{S_k \sim q} \left[ \frac{f(S_k)}{\binom{n-2}{k-2} \, q(S_k)} \right].
\end{equation}

The optimal proposal distribution is proportional to the ground truth $f(S_k)$, as this would make the above expectation a constant, leading to zero variance. Thus we will define $q(S_k)$ to be proportional to the sum over predicted counts for each pair of vertices in $S_k$. Let $g(u, v) \ge 0$ be a predicted score for the unordered pair $\{u, v\}$ (e.g., a probability that edge $(u, v)$ is true). In our setting, we take $g(u,v) = h(u)h(v)$ to be the product of the predicted probabilities that u and v are true. We can now calculate $q(S_k)$ explicitly in terms of predicted counts. Define
\begin{equation}
G = \sum_{\substack{\{u,v\}\subseteq S\\ u\neq v}} g(u,v), 
\qquad
g(S_k) = \sum_{\substack{\{u,v\}\subseteq S_k\\ u\neq v}} g(u,v).
\end{equation}
Then
\begin{equation}
q(S_k) = \frac{g(S_k)}{\binom{n-2}{k-2}\, G}.
\end{equation}
Where the normalization term in the denominator comes from rearranging Eq.~\ref{eq:total}. The subset importance sampling estimator then simplifies to
\begin{equation}
\hat{f}_{\text{IS}}(S) = \frac{G\, f(S_k)}{g(S_k)}, 
\quad S_k \sim q.
\end{equation}

\subsection{Sampling Procedure}
We have now defined the main estimator for our method, however our definition depends on the ability to draw samples from some proposal $q(S_k)$ which is proportional to the predicted counts. The number of possible subsets of size $k$ quickly becomes intractable with even moderately sized $k$, so it is not feasible to realize $q(S_k)$ over all subsets. Inspired by Midzuno-Sen's method \citep[]{midzuno1952sampling, sen1952present}, we propose the following efficient sampling procedure:

\begin{enumerate}
    \item Define an initial distribution $q_0$ over pairs of vertices, where $q_0(u,v) \propto g(u,v)$
    \item Sample the first two vertices as an unordered pair $\{v_1,v_2\}$ from $q_0$.
    \item Sample the remaining $k-2$ vertices uniformly without replacement.
\end{enumerate}

\begin{proposition}
The probability of sampling a vertex subset $S_k$ under this procedure is proportional to
\begin{equation}
q(S_k) \propto \sum_{\substack{\{u,v\}\subseteq S_k\\ u\neq v}} g(u,v)
\end{equation}
which is the predicted count of true edges in the subset.
\end{proposition}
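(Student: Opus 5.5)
We compute the probability of each fixed $k$-subset $S_k$ directly, by the law of total probability over the initial pair drawn in step 2. The procedure produces $S_k$ exactly when the initial unordered pair $\{v_1,v_2\}$ is some pair $\{u,v\}\subseteq S_k$ and the $k-2$ uniformly chosen remaining vertices are exactly $S_k\setminus\{u,v\}$. Distinct initial pairs are disjoint events, so
\begin{equation*}
\Pr(S_k) = \sum_{\substack{\{u,v\}\subseteq S_k\\ u\neq v}} q_0(u,v)\,\Pr\bigl(\text{rest} = S_k\setminus\{u,v\} \mid \{v_1,v_2\}=\{u,v\}\bigr).
\end{equation*}

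Next we evaluate the conditional factor. Given the first pair, step 3 draws $k-2$ vertices uniformly without replacement from the remaining $n-2$ vertices. The resulting unordered set is uniform over the $\binom{n-2}{k-2}$ subsets of $S\setminus\{u,v\}$ of size $k-2$. Hence the conditional probability equals $1/\binom{n-2}{k-2}$, independent of which pair was chosen. The key point is that this factor is constant across pairs, so it can be pulled out of the sum.

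Substituting $q_0(u,v)=g(u,v)/G$ from step 1 (with $G$ the normalizer defined earlier) gives
\begin{equation*}
\Pr(S_k) = \frac{1}{\binom{n-2}{k-2}\,G}\sum_{\substack{\{u,v\}\subseteq S_k\\ u\neq v}} g(u,v) = \frac{g(S_k)}{\binom{n-2}{k-2}\,G},
\end{equation*}
which is proportional to $g(S_k)$. It also matches exactly the normalized $q(S_k)$ stated earlier.

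As a sanity check, summing over all $S_k$ and using the pair-counting identity behind Eq.~\ref{eq:total} (applied to $g$ instead of $f$) gives total mass $1$. The only delicate point is the bookkeeping: we treat the initial pair as unordered and use that the outcome is a set, so there is no double counting over orderings. If $q_0$ were instead defined over ordered pairs, one would simply combine the two orderings, each with weight $g(u,v)/(2G)$, and obtain the same result.
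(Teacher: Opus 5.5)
Your proof is correct and matches the paper's argument: sum over the unordered initial pair $\{u,v\}\subseteq S_k$, observe that the conditional probability $1/\binom{n-2}{k-2}$ of completing to $S_k$ is the same for every pair, and factor it out. Substituting $q_0(u,v)=g(u,v)/G$ to get the normalized form and checking that the total mass is $1$ is a sound refinement, not a different route.
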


\begin{proof}
Fix a subset $S_k \subseteq S$ of size $k$. The probability of sampling this subset is the sum over probabilities of all possible permutations. 
This corresponds to choosing any unordered pair $\{u,v\}\subseteq S_k$ first (with probability $q_0(u,v)$), then selecting the remaining $k-2$ vertices from the $\binom{n-2}{k-2}$ equally likely $(k-2)$-subsets of $S\setminus\{u,v\}$. Thus
\begin{equation}
\begin{split}
q(S_k) &= \sum_{\substack{\{u,v\}\subseteq S_k}} q_0(u,v)\cdot \frac{1}{\binom{n-2}{k-2}} \\
&= \frac{1}{\binom{n-2}{k-2}} \sum_{\substack{\{u,v\}\subseteq S_k}} q_0(u,v),
\end{split}
\end{equation}
which is proportional to $\sum_{\{u,v\}\subseteq S_k} g(u,v)$ since $q_0(u,v)\propto g(u,v)$.
\end{proof}
\vspace{-2pt}
We can now use this procedure to efficiently sample our proposal distribution, without having to enumerate over all possible $k$-subsets. Furthermore, this process is self-similar. Given an existing subset of size $k$, adding 1 uniformly sampled vertex provides a $k+1$-subset drawn from the $k+1$ proposal distribution. Rather than drawing multiple independent subsets, we construct a single progressively growing subset, adding vertices one at a time and updating the estimate at each step.

\subsection{Multiple Bins}
For the two-point correlation problem, we must estimate pair counts across all distance bins simultaneously. Each bin shares the same set of vertices but contains a disjoint set of edges. Running separate estimators for each bin is inefficient, as it would require labeling many more vertices and would fail to exploit the overlap across bins.

Our approach leverages the fact that only the initial vertex pair must be sampled separately for each bin. We draw one pair per bin, with probability proportional to its predicted count, and then sample additional vertices uniformly at random, updating all bins simultaneously.

A complication arises when sampling without replacement: if the initial pairs differ across bins, the corresponding remaining candidate sets also differ. We address this by sampling from the full vertex set while ignoring vertices that have already been used in a bin’s initial sample. If a sampled vertex duplicates one already selected within a bin, that draw is discarded; if it has already been selected in another bin, it is reused. As a result, the sampled subsets may differ slightly in size across bins. This completes our sampling algorithm, which is summarized in Algorithm~1.

\begin{algorithm}[htb!]
\caption{Multi-bin Pairwise Estimation}
\KwIn{Set of vertices $S$, predicted edge scores $\{g_b(u,v)\}$ for each bin $b = 1,\dots,B$}
\KwOut{For each bin $b$: sampled subset $S^b_n$ and sequence of estimates $\{\hat{f}^b_{\text{IS}}(S^b_i)\}$}

\BlankLine
\ForEach{bin $b$}{
    $G_b \gets \sum_{\{u,v\} \subseteq S} g_b(u,v)$\;
}

\ForEach{bin $b$}{
    $q_{0,b}(u,v) \propto g_b(u,v)$ for all unordered pairs $\{u,v\}$\;
}

\ForEach{bin $b$}{
    Sample $\{v^b_1, v^b_2\} \sim q_{0,b}$\;
    
    $S^b_2 \gets \{v^b_1, v^b_2\}$\;
    
    Label $v^b_1, v^b_2$ to compute $f_b(S^b_2)$\;
    
    $g_b(S^b_2) \gets g_b(v^b_1, v^b_2)$\;
    
    $\hat{f}^b_{\text{IS}}(S^b_2) \gets G_b \cdot \frac{f_b(S^b_2)}{g_b(S^b_2)}$\;
    
    $\text{Estimates}_b \gets [\hat{f}^b_{\text{IS}}(S^b_2)]$\;
}

$n \gets |S|$\;

$U \gets S$\; 

\For{$t = 3$ \KwTo $n$}{
    Sample $v \sim \text{Uniform}(U)$\;
    
    $U \gets U \setminus \{v\}$\;
    
    \ForEach{bin $b$}{
        \eIf{$v \notin S^b_{t-1}$}{
            $S^b_t \gets S^b_{t-1} \cup \{v\}$\;
            
            Label $v$ to compute $f_b(S^b_t)$\;
            
            $g_b(S^b_t) \gets \sum_{\{u,v\} \subseteq S^b_t} g_b(u,v)$\;
            
            $\hat{f}^b_{\text{IS}}(S^b_t) \gets G_b \cdot \frac{f_b(S^b_t)}{g_b(S^b_t)}$\;
            
            Append $\hat{f}^b_{\text{IS}}(S^b_t)$ to $\text{Estimates}_b$\;
        }{
            $S^b_t \gets S^b_{t-1}$\tcp*{No update}
        }
    }
}

\Return{$\{(S^b_n, \text{Estimates}_b)\}_{b=1}^B$}
\end{algorithm}

\section{VARIANCE ESTIMATION}
We first present an exact expression for the variance of the Monte Carlo subset estimator and introduce a procedure for computing an unbiased variance estimate from a single $k$-subset. We then consider the more challenging problem of estimating the variance of the importance sampling estimator. Our proposed approach yields an unbiased estimate of a Taylor series approximation to the true variance. In later sections, we present empirical results showing that this approximation closely matches the true variance.

\subsection{Variance of $\hat{f}_{\text{MC}}$}  
In Eq.~\ref{eq:mean-relationship} we showed that the mean of $f(S_k)$ over all $k$-subsets can be expressed in terms of the mean of $f(u,v)$ over distinct vertex pairs. A similar reduction holds for the variance. It can be written in terms of expectations involving $f(u,v)f(w,z)$ for pairs of distinct vertex pairs $\{u,v\},\{w,z\}$.  

\begin{proposition}  
The variance of $\hat{f}_{\normalfont \text{MC}}$ is given by  
\begin{equation}
\boxed{%
\begin{aligned}
\operatorname{Var}[\hat{f}_{\text{MC}}(S)]
&= \frac{n(n-1)(n-k)}{2k(k-1)}
   \Bigl[ 2(E_{2}-E_{3})kn \\
&\quad + (E_{1}-4E_{2}+3E_{3})(n+k-1) \Bigr]
\end{aligned}}
\label{eq:mc-var}
\end{equation}
\label{prop:mc-var}
\end{proposition}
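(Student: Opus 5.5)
Throughout, I read $E_1,E_2,E_3$ as pair-level moments: $E_1=\mathbb{E}_{u,v}[f(u,v)^2]=\mathbb{E}_{u,v}[f(u,v)]$ over distinct pairs; $E_2=\mathbb{E}[f(u,v)f(u,w)]$ over ordered pairs of distinct vertex pairs sharing exactly one vertex; and $E_3=\mathbb{E}[f(u,v)f(w,z)]$ over ordered pairs of vertex-disjoint pairs. If the paper's definitions differ, only the bookkeeping changes. The plan mirrors the derivation of Eq.~\ref{eq:mean-relationship}. Write $c=\frac{n(n-1)}{k(k-1)}$ and $N=\binom{n}{2}$. Then
\[
\operatorname{Var}[\hat f_{\text{MC}}]=c^2\bigl(\mathbb{E}_{S_k\sim U}[f(S_k)^2]-(\mathbb{E}_{S_k\sim U}[f(S_k)])^2\bigr),
\]
and the task is to express both terms through $E_1,E_2,E_3$.

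\emph{Second moment.} Expand $f(S_k)^2=\sum_{e,e'}f(e)f(e')\mathbf 1[e\cup e'\subseteq S_k]$ over ordered pairs of vertex pairs, and take expectations under $U$. The inclusion probability depends only on $|e\cup e'|$, and equals $\binom{n-m}{k-m}/\binom{n}{k}$ with $m=2,3,4$. This gives
\[
p_2=\frac{k(k-1)}{n(n-1)},\quad p_3=p_2\frac{k-2}{n-2},\quad p_4=p_3\frac{k-3}{n-3}.
\]
Next, count the ordered pairs $(e,e')$ of each type. There are $N$ with $e=e'$, $2N(n-2)$ sharing one vertex, and $N\binom{n-2}{2}$ that are disjoint. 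As a check, $1+2(n-2)+\binom{n-2}{2}=N$. Hence
\[
\mathbb{E}[f(S_k)^2]=N\Bigl[p_2E_1+2(n-2)p_3E_2+\tbinom{n-2}{2}p_4E_3\Bigr].
\]

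\emph{First moment squared.} By Eq.~\ref{eq:subset-mean}, $c\,\mathbb{E}[f(S_k)]=f(S)$. Write $f(S)^2$ as the same triple sum with all inclusion probabilities equal to $1$:
\[
f(S)^2=N\Bigl[E_1+2(n-2)E_2+\tbinom{n-2}{2}E_3\Bigr].
\]

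\emph{Combining.} Subtracting gives
\[
\operatorname{Var}=N\Bigl[(c^2p_2-1)E_1+2(n-2)(c^2p_3-1)E_2+\tbinom{n-2}{2}(c^2p_4-1)E_3\Bigr].
\]
Each coefficient is now explicit: $c^2p_2=c$, $c^2p_3=c\frac{k-2}{n-2}$, and $c^2p_4=c\frac{(k-2)(k-3)}{(n-2)(n-3)}$.

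The remaining step, and the main obstacle, is pure algebra: show that this equals the boxed form Eq.~\ref{eq:mc-var}, with the overall factor $\frac{n(n-1)(n-k)}{2k(k-1)}$. Pulling out $N/(k(k-1))$ leaves polynomial coefficients in $n,k$. I would verify that each coefficient vanishes at $k=n$, as it must because the variance is $0$ then, so $(n-k)$ factors out. I would then rewrite the result in the basis $(E_1-4E_2+3E_3)$ and $(E_2-E_3)$ and match the coefficients $(n+k-1)$ and $2kn$. Two sanity checks guard against slips: the case $k=2$, which reduces to simple random sampling of a single edge; and the case where $f\equiv 1$, where $E_1=E_2=E_3$ and the variance must vanish, consistent with the bracket reducing to $0$.
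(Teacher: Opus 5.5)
Your proposal is correct and is essentially the paper's argument. The paper writes the variance as $c^2\sum_{e,e'} f(e)f(e')\operatorname{Cov}(\mathbf 1_e,\mathbf 1_{e'})$ and splits by $|e\cup e'|\in\{2,3,4\}$. That is termwise identical to your decomposition $c^2\mathbb{E}[f(S_k)^2]-f(S)^2$: the same inclusion probabilities, the same term counts, and the same coefficients $c-1$, $c\frac{k-2}{n-2}-1$, $c\frac{(k-2)(k-3)}{(n-2)(n-3)}-1$.

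The algebra you defer does go through. Setting $A=\frac{n(n-1)(n-k)}{2k(k-1)}$, the coefficients of $E_1,E_2,E_3$ become $A(n+k-1)$, $2A(nk-2n-2k+2)$ and $A(-2nk+3n+3k-3)$. These sum to zero, as your $f\equiv 1$ check predicts, and they match the boxed formula.
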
  

The terms $E_1$, $E_2$, and $E_3$ are expectations of $f(u,v)f(w,z)$ over different distributions of vertex pairs.  

\begin{equation}  
\begin{aligned}  
E_2 &= \mathbb{E}_{\{u,v\}}\bigl[f(u,v)^{2}\bigr], \\  
E_3 &= \mathbb{E}_{|\{u,v,w\}|=3}\bigl[f(u,v)f(v,w)\bigr], \\  
E_4 &= \mathbb{E}_{|\{u,v,w,z\}|=4}\bigl[f(u,v)f(w,z)\bigr].  
\end{aligned}  
\end{equation}  

The expectation $E_2$ is defined by drawing the unordered pair $(u,v)$ uniformly from the $\binom{n}{2}$ pairs of distinct vertices. The expectation $E_3$ is defined by drawing $(u,v,w)$ uniformly from all ordered triples of distinct vertices and then forming the two pairs $(u,v)$ and $(w,v)$ so that the pairs share exactly one vertex. The expectation $E_4$ is defined by drawing $(u,v,w,z)$ uniformly from all ordered quadruples of distinct vertices and then forming the two pairs $(u,v)$ and $(w,z)$ so that the pairs do not overlap.  

In summary $E_2$ corresponds to identical pairs of vertices, $E_3$ corresponds to pairs that overlap at exactly one vertex, and $E_4$ corresponds to pairs that are disjoint.  

The full proof of Proposition \ref{prop:mc-var} can be found in the Supplementary Material. Importantly, these expectations depend only on vertex pairs rather than subsets, which makes them considerably more tractable. Additionally, this expression is linear in $E_2$, $E_3$, and $E_4$, allowing us to form an unbiased estimator of the variance by estimating these expectations individually.

We can estimate each $E_i$ using sample means. Specifically, our estimate is obtained by evaluating all pair–of–pair combinations within the sampled subset $S_k$,
\begin{equation}
    \hat{E_i} = \frac{1}{n_i}\sum_{\substack{\{u,v,w,z\} \subseteq S_k \\
    u \neq v, w \neq z\\
    |\{u,v,w,z\}| = i}} f(u,v)f(w,z), 
\quad S_k \sim U.
\end{equation}
where $n_i$ is the number of terms in the sum. 
\begin{proposition}
Our sample means $\hat{E}_i$ are unbiased estimates of the population means $E_i$
\end{proposition}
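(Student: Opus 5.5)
The argument is a symmetry (exchangeability) argument. Under $S_k \sim U$, every set of $i$ distinct vertices is equally likely to lie inside $S_k$. For each $i\in\{2,3,4\}$, write $\hat{E}_i$ as a sum over configurations of type $i$: identical pairs, pairs sharing exactly one vertex, or disjoint pairs. The configurations will be indexed exactly as in the definition of $E_i$ (ordered triples $(u,v,w)$ for $E_3$, ordered quadruples $(u,v,w,z)$ for $E_4$, unordered pairs for $E_2$), with the indexing restricted to $S_k$. Then
\begin{equation*}
\hat{E}_i=\frac{1}{n_i}\sum_{c\in \mathcal{C}_i} \mathbf{1}[c\subseteq S_k]\,\phi(c),
\end{equation*}
where $\mathcal{C}_i$ is the set of all type-$i$ configurations in $S$, $\phi(c)$ is the corresponding product $f(u,v)f(w,z)$, and $c\subseteq S_k$ means all vertices of $c$ lie in $S_k$.

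The key steps, in order, are as follows.

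\begin{enumerate}
\item Observe that $n_i$ is deterministic. It depends only on $k$: $n_2=\binom{k}{2}$, $n_3=k(k-1)(k-2)$ and $n_4=k(k-1)(k-2)(k-3)$, up to the same ordering convention used for $E_i$. Hence $\mathbb{E}[\hat{E}_i]=\frac{1}{n_i}\sum_{c}\Pr[c\subseteq S_k]\,\phi(c)$ by linearity.
\item Compute the inclusion probability exactly as in the counting argument behind Eq.~\ref{eq:total}. A fixed set of $i$ distinct vertices lies in a uniform $k$-subset with probability
\begin{equation*}
\binom{n-i}{k-i}\Big/\binom{n}{k}=\frac{k(k-1)\cdots(k-i+1)}{n(n-1)\cdots(n-i+1)}.
\end{equation*}
This value is the same for every $c\in\mathcal{C}_i$.
\item Conclude that $\mathbb{E}[\hat{E}_i]=\frac{\Pr[c\subseteq S_k]}{n_i}\,|\mathcal{C}_i|\cdot E_i$, using the fact that $E_i=\frac{1}{|\mathcal{C}_i|}\sum_{c}\phi(c)$ is the uniform average over type-$i$ configurations.
\item Verify that the constant equals $1$, i.e.\ $n_i=|\mathcal{C}_i|\Pr[c\subseteq S_k]$. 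Counting the same configurations in $S$ and in $S_k$ gives $|\mathcal{C}_i|$ as the falling-factorial count with $n$ in place of $k$. The ratio of falling factorials therefore cancels exactly with the inclusion probability.
\end{enumerate}

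The main obstacle is bookkeeping rather than probability. The counting convention must be consistent between $n_i$ and $|\mathcal{C}_i|$, for instance whether ordered triples are counted or whether symmetric duplicates such as $(u,v,w)$ and $(w,v,u)$ are identified. I would fix the convention exactly as stated in the definitions of $E_3$ and $E_4$ and apply the same convention to the within-subset sum. Any multiplicity factor then appears identically in the numerator and the denominator and cancels.

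Finally, the argument requires $k\ge i$ so that $n_i>0$. In particular, an unbiased $\hat{E}_4$ needs $k\ge 4$.
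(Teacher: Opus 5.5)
Your proposal is correct and follows essentially the same route as the paper: write $\hat{E}_i$ using inclusion indicators, apply linearity, note that the inclusion probability $\binom{n-i}{k-i}/\binom{n}{k}$ depends only on the number of distinct vertices, and check that it cancels exactly against the ratio of configuration counts $n_i/|\mathcal{C}_i|$. The paper does this case by case for $i=2,3,4$, using $n_4=k(k-1)(k-2)(k-3)/4$, a different but equally consistent counting convention, whereas you handle all three cases uniformly.
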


We can prove this by directly computing the expectation using indicator variables and the exact expressions for $n_i$. The details are provided in the Supplementary Material.
After obtaining estimates $\hat{E_2}$, $\hat{E_3}$, and $\hat{E_4}$ with vertices in $S_k$, we can then plug the result into Equation \ref{eq:mc-var} to get our unbiased estimate of the variance.

\subsection{Variance of $\hat{f}_{\text{IS}}$}
For our main method which incorporates importance sampling, we cannot derive a linear expression for the variance because we have a ratio of random variables. However, we can derive a linear expression for a close approximation of the variance. We can start by writing the variance as,
\begin{equation}
    \operatorname{Var}_q[\hat{f}_{\text{IS}}(S)] = G^2\left(\mathbb{E}_q\left[\frac{f^2(S_k)}{g^2(S_k)}\right] - \mathbb{E}_q\left[\frac{f(S_k)}{g(S_k)}\right]^2\right)
\end{equation}

Converting to a uniform expectation,
\begin{equation}
   = G\frac{n(n-1)}{k(k-1)}\mathbb{E}\left[\frac{f^2(S_k)}{g(S_k)}\right] - \left(\frac{n(n-1)}{k(k-1)}\right)^2\mathbb{E}\left[f(S_k)\right]^2
\end{equation}

We can approximate the first expectation using the delta method \citep[e.g.,][Section~5.3.1]{bickel-doksum}. Viewing this ratio as a multivariate function $h(X,Y) = X/Y$, we can form an approximation using the second order Taylor series about the point $(\mathbb{E}[f^2(S_k)], \mathbb{E}[g(S_k)])$. 
As $k$ increases, we expect $X$ and $Y$ to concentrate around their means, so this approximation becomes increasingly accurate. This gives,
\begin{equation}
\begin{aligned}
\mathbb{E}\!\left[\frac{f^2(S_k)}{g(S_k)}\right]
&\approx \frac{\mathbb{E}[f^2(S_k)]}{\mathbb{E}[g(S_k)]} 
   - \frac{\operatorname{Cov}\!\bigl(f^2(S_k),g(S_k)\bigr)}{\mathbb{E}[g(S_k)]^{2}} \\
&\quad + \frac{\mathbb{E}[f^2(S_k)]\,\operatorname{Var}[g(S_k)]}{\mathbb{E}[g(S_k)]^{3}}.
\end{aligned}
\end{equation}
We can further decompose,
\begin{equation}
\scalebox{0.95}{$
\operatorname{Cov}(f^2(S_k),g(S_k))
= \mathbb{E}[f^2(S_k)g(S_k)]
-\mathbb{E}[f^2(S_k)]\mathbb{E}[g(S_k)]
$}
\end{equation}

Since $g(u,v)$ is known for all pairs, we can compute $\mathbb{E}[g(S_k)]$ and $\operatorname{Var}[g(S_k)]$ exactly. Combining these last 3 equations, our expression for the variance is a linear combination of $\mathbb{E}[f^2(S_k)]$, $\mathbb{E}[f^2(S_k)g(S_k)]$, and $\mathbb{E}[f(S_k)]^2$. 

Following a similar approach to the previous section, we can express $\mathbb{E}[f^2(S_k)]$ and $\mathbb{E}[f(S_k)]^2$ as linear combinations of $E_1$, $E_2$, and $E_3$. The mixed term, $\mathbb{E}[f^2(S_k)g(S_k)]$, can also be decomposed, but in this case into a linear combination of expectations involving triplets of vertex pairs, $\mathbb{E}[f(u,v)f(w,z)g(r,s)]$, which we denote $D_2,\dots,D_6$. In summary, it suffices to construct an unbiased estimator for each $E_i$ and $D_i$, as their linear combination yields an unbiased estimator of our second order variance approximation. A complete derivation and explicit formulas are provided in the Supplementary Material.

Unlike the previous section, the $E_i$ and $D_i$ terms can no longer be estimated directly by a sample mean, since the vertices are not sampled uniformly. To correct for this mismatch between the uniform expectation and non-uniform samples, we employ the unbiased estimator of \cite{bbbe4e97-7833-3faa-820c-4c61f82fe965}, which reweights each term by the inverse of its inclusion probability. This leads to the estimator
\begin{equation}
    \hat{E}_i^{\text{IS}} = \frac{1}{N_i}\sum_{\substack{u,v,w,z \subseteq S_k \\
    u \neq v, w \neq z\\
    |\{u,v,w,z\}| = i}} \frac{f(u,v)f(w,z)}{\pi(u,v,w,z)}, 
\quad S_k \sim q,
\end{equation}
where $N_i$ is the number of such pairs in the full dataset $S$, and $\pi(u,v,w,z)$ is the probability that the vertices $u,v,w,z$ appear in a sampled $k$-subset. The estimators for $\hat{D}_i^{\text{IS}}$ are defined similarly.

This completes the derivation of all quantities needed for our variance approximation. The corresponding procedure is summarized in Algorithm \ref{alg:var}. With additional algebraic simplifications, all $\hat{E}_i$ and $\hat{D}_i$ can be computed in $O(E^2)$ time, where $E$ is the number of true edges in the sampled $k$-subset.

\begin{algorithm}[htb!]
\label{alg:var}
\caption{Variance Estimation}
\KwIn{Subset of vertices $S_k$, predicted edge scores $g(u,v)$, ground truth labels $f(u,v)$ in $S_k$}
\KwOut{Variance Estimate $\hat{V_k}$}

Compute $\mathbb{E}[g(S_k)]$ and $\operatorname{Var}[g(S_k)]$\;

Estimate $\hat{E_2}$, $\hat{E_3}$, and $\hat{E_4}$ using Horvitz–Thompson on pairs of pairs in $S_k$.\;

Compute unbiased estimate for $\mathbb{E}[f^2(S_k)]$ and $\mathbb{E}[f(S_k)]^2$ using $\hat{E}_2$, $\hat{E_3}$, and $\hat{E_4}$\;

Estimate $\hat{D_2}$, ..., $\hat{D_6}$ using Horvitz–Thompson on triplets of pairs in $S_k$.\;

Compute unbiased estimate for $\mathbb{E}[f^2(S_k)g(S_k)]$  using $\hat{D_2}$, ...,$\hat{D_6}$\;

Calculate $\hat{V}_k$ from $\mathbb{E}[f^2(S_k)]$, $\mathbb{E}[f(S_k)]^2$, and $\mathbb{E}[f^2(S_k)g(S_k)]$.\;

\Return{$\hat{V}_k$}
\end{algorithm}

\section{DATASETS AND EXPERIMENTS}

\subsection{JWST FEAST Data}
For our two-point correlation analysis, we use imaging data from the James Webb Space Telescope (JWST). The dataset comprises multi-band observations, with each band corresponding to a distinct wavelength. Star clusters are gravitationally bound groups of stars that formed from the same molecular cloud and are therefore important tracers of recent star formation. Source labels come from the JWST–FEAST program (Feedback in Emerging extrAgalactic Star clusTers), which studies star formation and stellar feedback in nearby galaxies. Specifically, we use data from the Cycle 1 General Observer program 1783 (JWST-GO-1783; P.I.: A. Adamo). The FEAST catalogs provide precise positions and classifications of young star clusters (YSCs), serving as ground-truth annotations. In our experiments, we use this ground truth to simulate a human-in-the-loop annotator, though in practice our method is designed for datasets without such catalogs.

Each galaxy is represented by a large multi-channel image and a catalog of YSCs containing pixel coordinates, cluster class, and other metadata. YSCs fall into three groups: two early types (eYSCI and eYSCII), which are detectable in the infrared with JWST and have been identified as young, dust-embedded star clusters \citep[]{2024ApJ...971..115G, 2025arXiv250901670P}, and optical YSCs (oYSCs), observed with the Hubble Space Telescope (HST), which are younger than 10 million years old and have already emerged from their natal clouds of dust and gas. In this work, we treat eYSCI and eYSCII clusters as target sources and all others as background.

\subsection{NGC628 and NGC4449 Galaxies}
We analyze two galaxies at different scales: NGC628, a nearby spiral galaxy, and NGC4449, a dwarf galaxy.

The NGC628 dataset contains 2,618 sources, of which 885 are classified as eYSCI or eYSCII. The two-point correlation function is computed across 14 logarithmically spaced distance bins. This yields approximately 1.6 million candidate edges, of which 177,085 are true edges.

The NGC4449 dataset contains 659 sources, of which 260 are classified as eYSCI or eYSCII. Here, the two-point correlation function is computed across 13 logarithmically spaced distance bins, yielding 216,798 candidate edges, of which 33,664 are true edges. 

\subsection{Experimental Protocol}

\begin{figure*}[t!]
    \centering
    \includegraphics[width=0.4\textwidth]{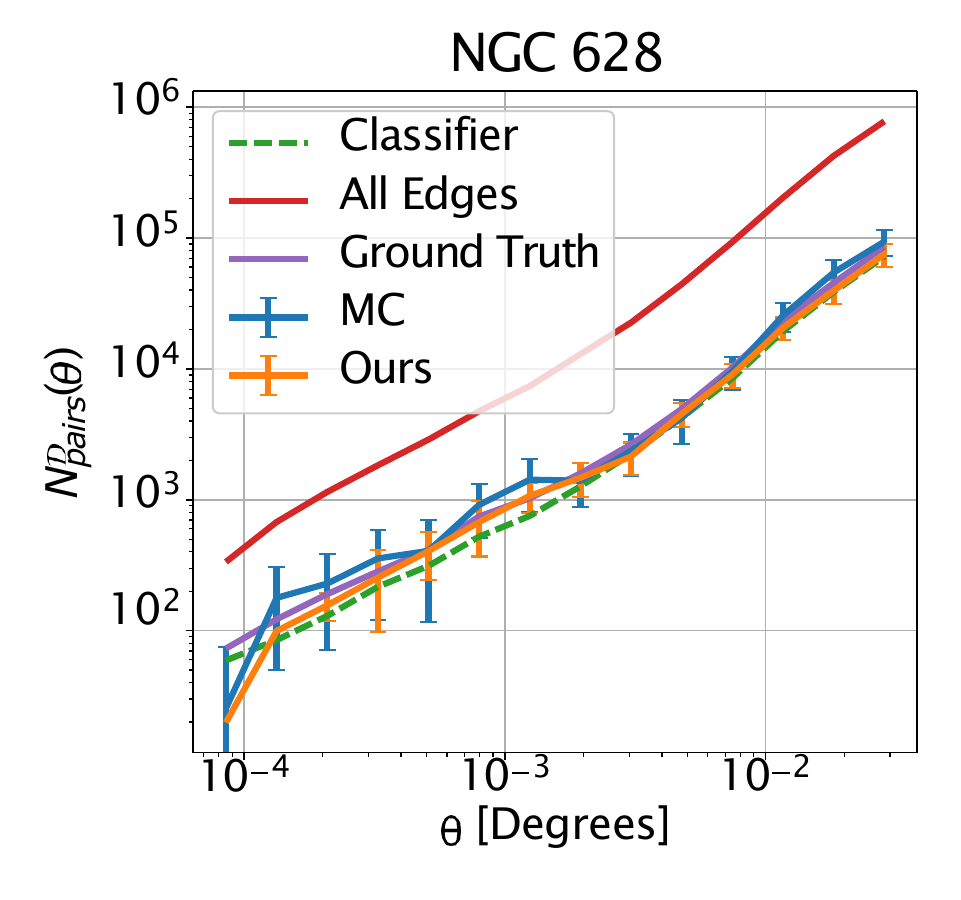}
    \hspace{0.05\textwidth} %
    \includegraphics[width=0.4\textwidth]{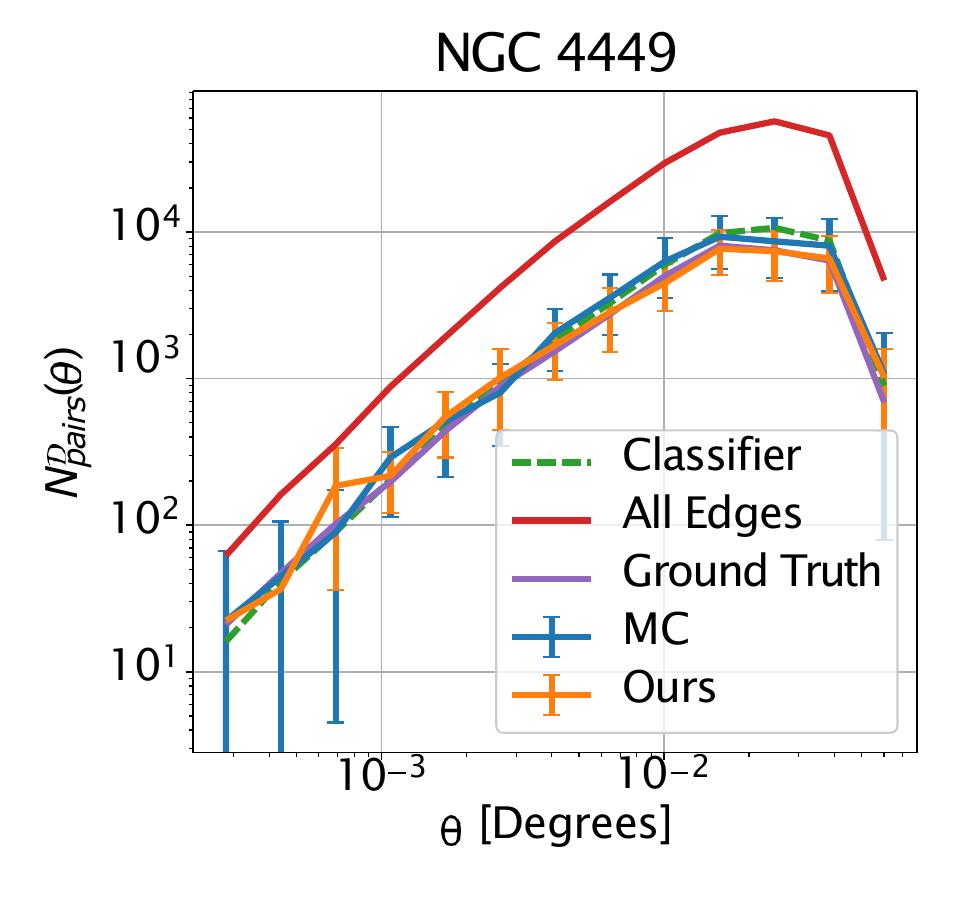}
    \caption{Predicted counts and estimated 95\% confidence intervals of a single trial 
    with 20\% of sources labeled. Our method has less error and smaller error bars compared 
    to MC. There are significantly fewer ground truth edges compared to all edges, illustrating the importance of source classification.}
    \label{fig:main-left}

    \vspace{1.5em} %

    \includegraphics[width=0.4\textwidth]{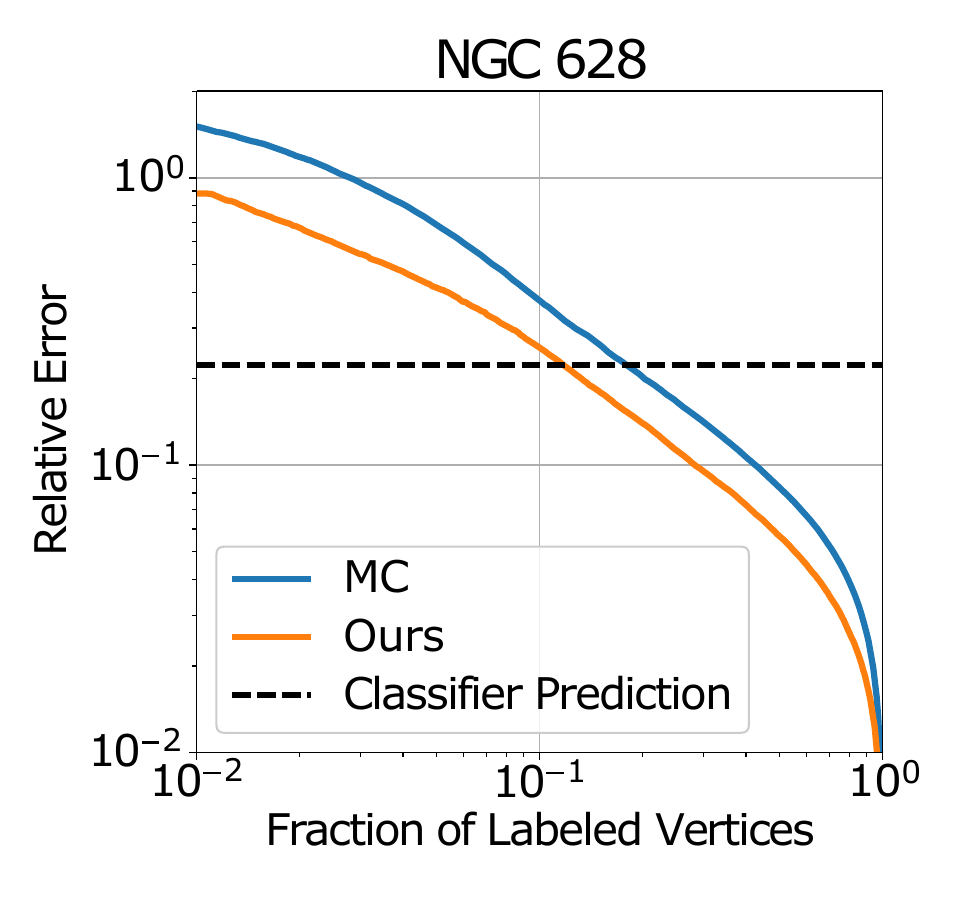}
    \hspace{0.05\textwidth}
    \includegraphics[width=0.4\textwidth]{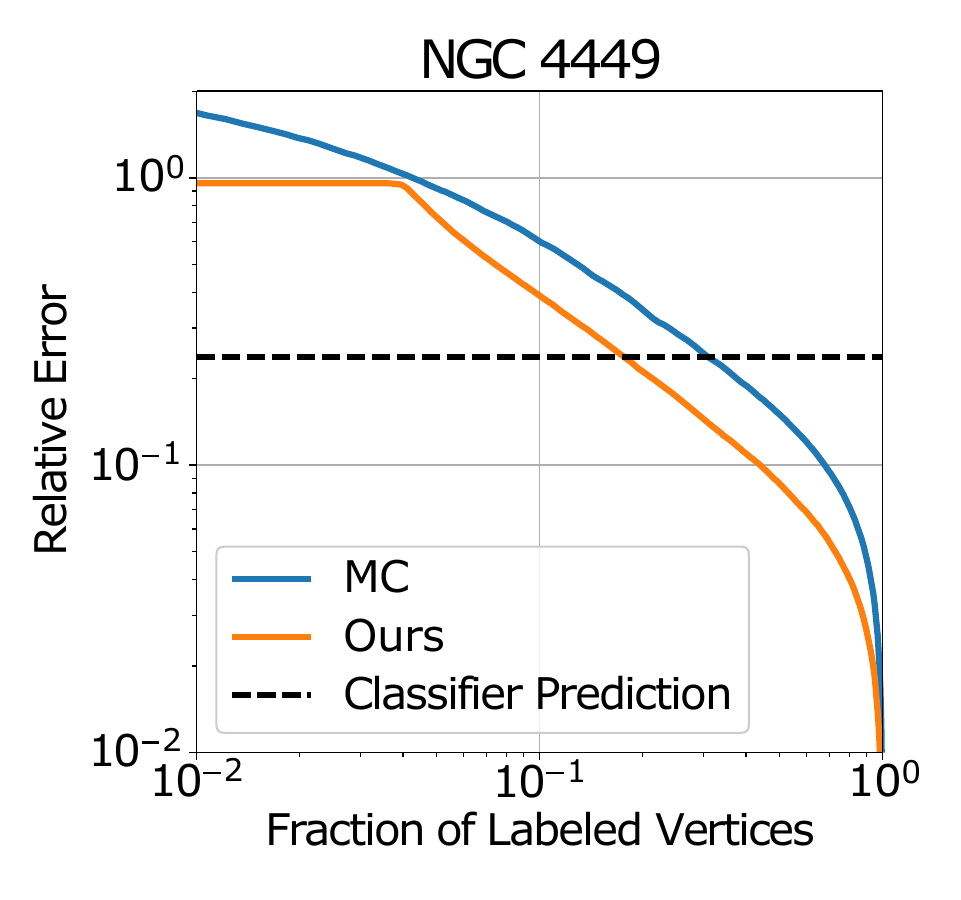}
    \caption{Relative error of the estimated counts for the two galaxies as a function 
    of labeled vertices. The errors are averaged over 2000 trials and all bins. Our method consistently reduces error over MC, and beats the classifier when around 20\% of the vertices are labeled.}
    \label{fig:main-right}
\end{figure*}

\begin{figure}[t!]
    \centering
    \includegraphics[width=0.99\linewidth]{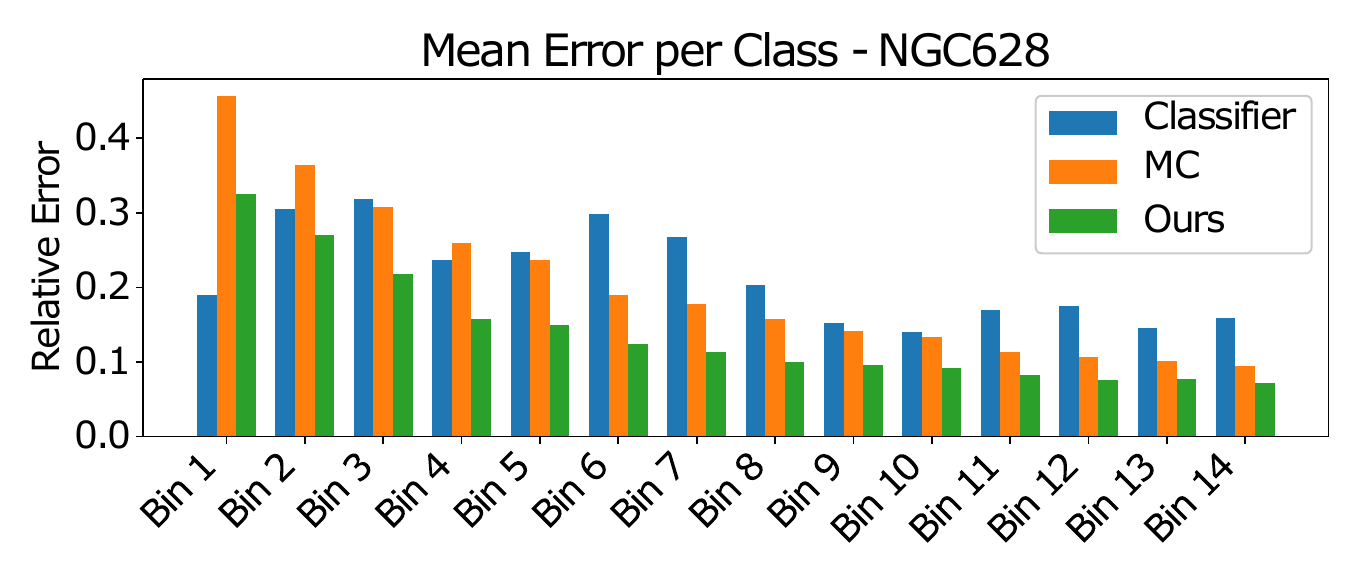}
    \includegraphics[width=0.99\linewidth]{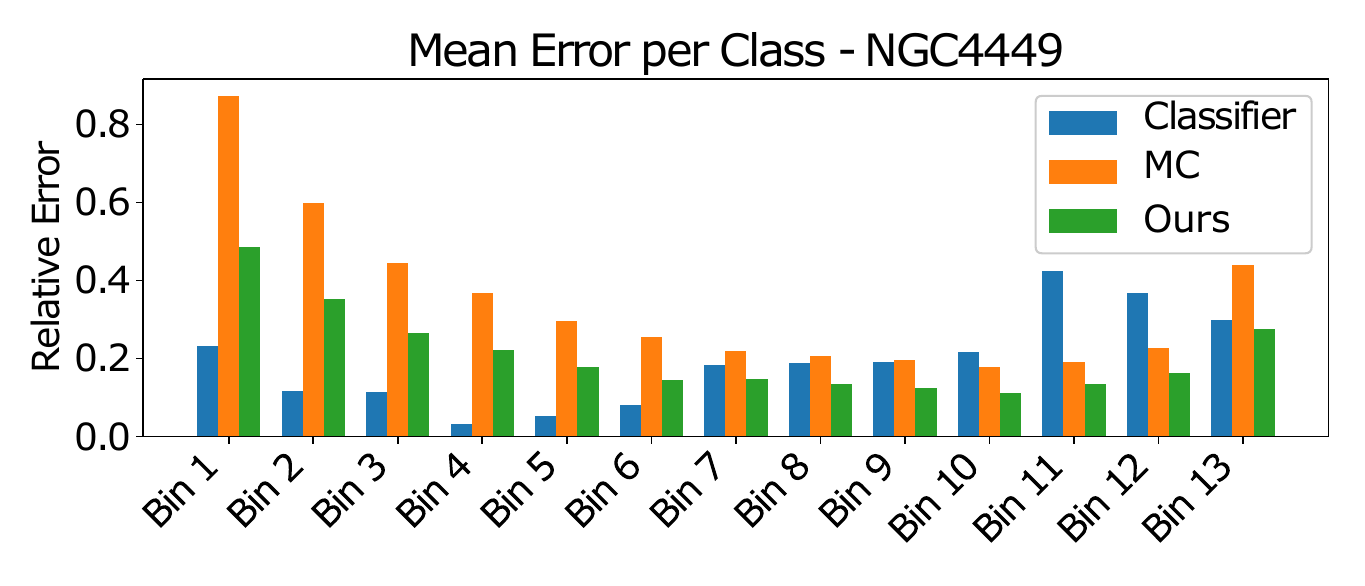}
    \caption{Errors across bins with 20\% labeled vertices. Our method consistently outperforms the Monte Carlo baseline, with the lowest error on the largest bins.}
    \label{fig:confidence}
\end{figure}

\begin{figure}[t!]
    \centering
    \includegraphics[width=1.02\linewidth]{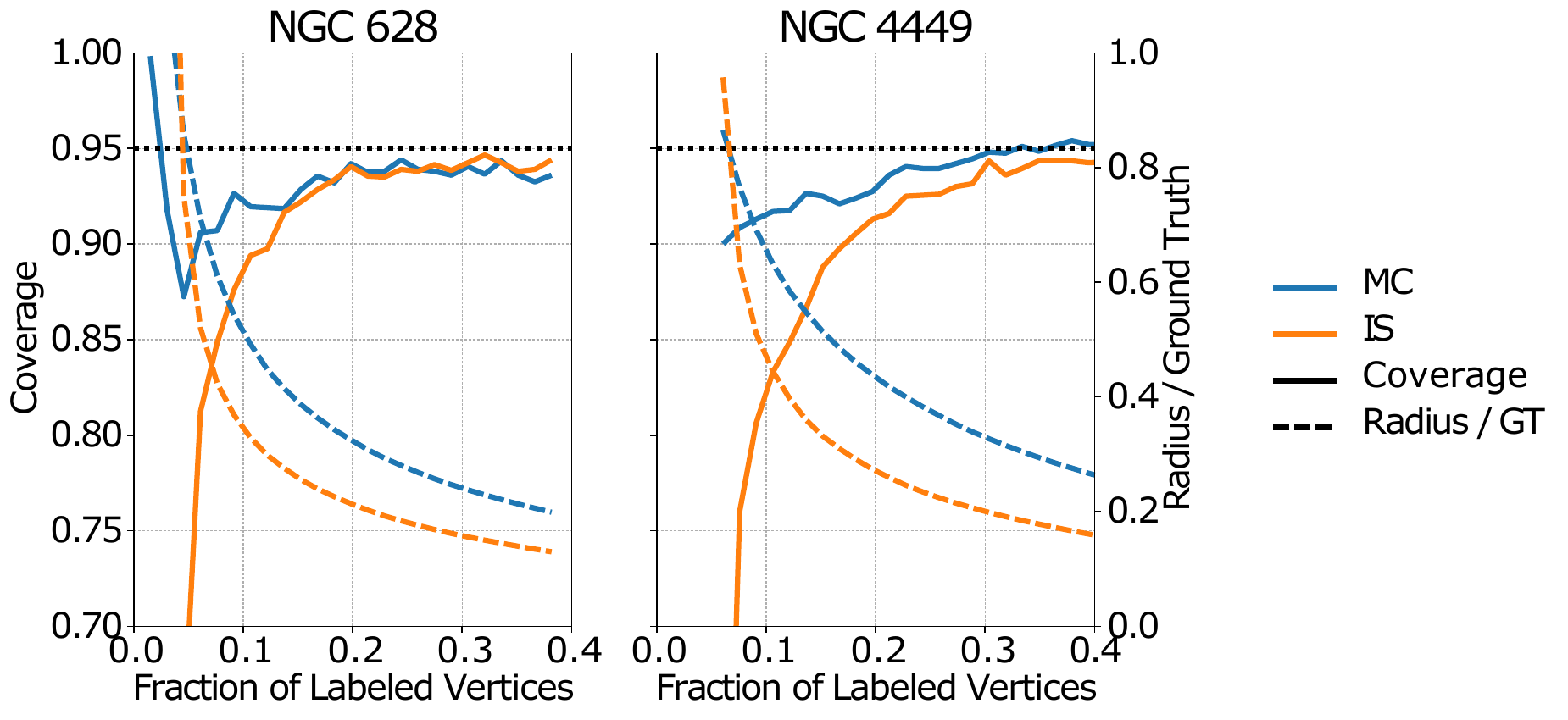}
    \caption{Evaluation of estimated 95\% confidence intervals for both galaxies (Bin 10): coverage and radius relative to the ground truth count, plotted against the number of labeled vertices (averaged over 2000 trials).}
    \label{fig:coverage}
\end{figure}

\paragraph{Classifier}
We use both real and simulated classifier probabilities in our experiments.

For NGC628, the real predictions are generated using StarcNet~\citep{pmcmjas_apj2021}, a neural network designed to identify star clusters. We train a new model on JWST observations of NGC628 and obtain predicted class probabilities through $k$-fold cross-validation, training on nine folds and predicting on the remaining fold. Repeating this process 10 times yields held-out predictions for all sources. The resulting classifier achieves an accuracy of approximately 85\%. Full training details are provided in the Supplementary Material.

For NGC4449, we instead simulate predictions by sampling from beta distributions. Specifically, we use $\alpha=3, \beta=1$ for positive samples and $\alpha=1, \beta=3$ for negative samples, producing probabilities corresponding to a classifier with an accuracy of about 87.5\%.

\paragraph{Baselines}
We evaluate our method against two baselines. First, a Monte Carlo (MC) estimator (Section~\ref{est:mc}), which samples sources uniformly at random. This baseline highlights the gains achieved by incorporating the classifier. Second, a direct classifier-based estimator. We use the total predicted count, $g(S)$, as our estimate for each distance bin. Unlike the other methods, this baseline can have bias and does not provide error bars.

\paragraph{Evaluation Metrics}
We use the fractional error as our evaluation metric, averaged across all trials: 
$\frac{1}{M} \sum_{m=1}^M \frac{|\hat{f}(S) - f(S)|}{f(S)}$, 
where $\hat{f}(S)$ is our estimate, $f(S)$ is the ground truth count, and $M$ is the number of trials. In some cases, we also average this fractional error across all bins.

\section{RESULTS}

\paragraph{Main Results} 
Figure~\ref{fig:main-left} illustrates the predicted pair counts and estimated confidence intervals from a single trial for both NGC 628 and NGC 4449, showing that our method produces accurate estimates with well-calibrated error bars. Figure~\ref{fig:main-right} reports the fractional error of the predicted counts as a function of the fraction of labeled vertices, averaged over 500 trials and all bins. Across all settings, our method consistently outperforms the MC baseline, achieving a 33\% reduction in relative error and requiring 33\% fewer labels to reach 10\% relative error.

Figure~\ref{fig:confidence} compares the relative error across all bins when 20\% of the vertices are labeled. Across all bins, our method yields lower average error than the MC baseline, and it outperforms the classifier prediction for most bins.

\paragraph{Confidence Intervals}
Figure~\ref{fig:coverage} evaluates the estimated confidence intervals for Bin 10, which we selected because of its moderate size. With sufficient labeled data, the coverage of our 95\% confidence intervals approaches the ideal 95\% target. Our intervals are also narrower than those from the MC approach, which indicates more precise uncertainty estimates. Performance depends on bin size, with smaller bins showing worse coverage and larger bins showing better coverage. Results for all bins are included in the Supplementary Material.

\section{CONCLUSION}

We present a human-in-the-loop framework for efficiently estimating two-point correlation functions (2PCFs) of young star clusters. Using a pre-trained classifier to guide sampling, our method adaptively selects informative points for labeling, yielding unbiased pair-count estimates and confidence intervals across distance bins. Compared to standard Monte Carlo approaches, it achieves lower variance and substantially reduces annotation effort, enabling scalable and statistically rigorous 2PCF measurement in large astronomical catalogs.

While our experiments focus on astronomy, the two-point correlation function is used across science and engineering beyond cosmology, including in ecological modeling of the spatial distribution of trees \citep{SEKRETENKO1998113} and birds \citep{sun2014reduced}, as well as in materials science \citep{PhysRevLett.77.2581}. More generally, our framework applies to any problem that can be reduced to counting weighted edges in a graph, where the weight of an edge depends on the ground-truth labels of its two endpoints. For example, in social media, it could be used to study how likely users sharing the same attribute are to be connected.

The method may be less effective when the classifier is inaccurate or the target subset is extremely sparse. Future work could jointly adapt the sampling strategy and improve the classifier as new labels are collected.

\subsubsection*{Acknowledgements}
We thank Daniela Calzetti and Drew Lapeer for sharing their astronomical domain expertise, providing guidance on the dataset, and offering valuable feedback on the manuscript. We also thank Gustavo Perez for his guidance with the cluster prediction, and Jinlin Lai for his helpful feedback on the manuscript. This work was supported in part by National Science Foundation grants \#2504073 and \#2406687. 

This work is based on observations made with the NASA/ESA/CSA James Webb Space Telescope, which is operated by the Association of Universities 
for Research in Astronomy, Inc., under NASA contract NAS 5-03127. 
These observations are associated with program \# 1783. Support for program \# 1783 was provided by NASA through a grant from the Space Telescope Science Institute, which is operated by the Association of Universities for Research in Astronomy, Inc., under NASA contract NAS 5-03127.

\bibliography{references.bib}

\begin{thebibliography}{}

\bibitem[{Adamo} et~al., 2017]{2017ApJ...841..131A}
{Adamo}, A., {Ryon}, J.~E., {Messa}, M., {Kim}, H., {Grasha}, K., {Cook}, D.~O., {Calzetti}, D., {Lee}, J.~C., {Whitmore}, B.~C., {Elmegreen}, B.~G., {Ubeda}, L., {Smith}, L.~J., {Bright}, S.~N., {Runnholm}, A., {Andrews}, J.~E., {Fumagalli}, M., {Gouliermis}, D.~A., {Kahre}, L., {Nair}, P., {Thilker}, D., {Walterbos}, R., {Wofford}, A., {Aloisi}, A., {Ashworth}, G., {Brown}, T.~M., {Chandar}, R., {Christian}, C., {Cignoni}, M., {Clayton}, G.~C., {Dale}, D.~A., {de Mink}, S.~E., {Dobbs}, C., {Elmegreen}, D.~M., {Evans}, A.~S., {Gallagher}, III, J.~S., {Grebel}, E.~K., {Herrero}, A., {Hunter}, D.~A., {Johnson}, K.~E., {Kennicutt}, R.~C., {Krumholz}, M.~R., {Lennon}, D., {Levay}, K., {Martin}, C., {Nota}, A., {{\"O}stlin}, G., {Pellerin}, A., {Prieto}, J., {Regan}, M.~W., {Sabbi}, E., {Sacchi}, E., {Schaerer}, D., {Schiminovich}, D., {Shabani}, F., {Tosi}, M., {Van Dyk}, S.~D., and {Zackrisson}, E. (2017).
\newblock {Legacy ExtraGalactic UV Survey with The Hubble Space Telescope: Stellar Cluster Catalogs and First Insights Into Cluster Formation and Evolution in NGC 628}.
\newblock {\em The Astrophysical Journal}, 841(2):131.

\bibitem[Angelopoulos et~al., 2023]{angelopoulos2023prediction}
Angelopoulos, A.~N., Bates, S., Fannjiang, C., Jordan, M.~I., and Zrnic, T. (2023).
\newblock Prediction-powered inference.
\newblock {\em Science}, 382(6671):669--674.

\bibitem[Banerji et~al., 2010]{banerji2010galaxy}
Banerji, M., Lahav, O., Lintott, C.~J., Abdalla, F.~B., Schawinski, K., Bamford, S.~P., Andreescu, D., Murray, P., Raddick, M.~J., Slosar, A., et~al. (2010).
\newblock Galaxy zoo: reproducing galaxy morphologies via machine learning.
\newblock {\em Monthly Notices of the Royal Astronomical Society}, 406(1):342--353.

\bibitem[Bickel and Doksum, 2015]{bickel-doksum}
Bickel, P.~J. and Doksum, K.~A. (2015).
\newblock {\em Mathematical Statistics: Basic Ideas and Selected Topics}, volume~I.
\newblock Chapman \& Hall, 2nd edition.

\bibitem[{Calzetti} et~al., 2015]{2015AJ....149...51C}
{Calzetti}, D., {Lee}, J.~C., {Sabbi}, E., {Adamo}, A., {Smith}, L.~J., {Andrews}, J.~E., {Ubeda}, L., {Bright}, S.~N., {Thilker}, D., {Aloisi}, A., {Brown}, T.~M., {Chandar}, R., {Christian}, C., {Cignoni}, M., {Clayton}, G.~C., {da Silva}, R., {de Mink}, S.~E., {Dobbs}, C., {Elmegreen}, B.~G., {Elmegreen}, D.~M., {Evans}, A.~S., {Fumagalli}, M., {Gallagher}, III, J.~S., {Gouliermis}, D.~A., {Grebel}, E.~K., {Herrero}, A., {Hunter}, D.~A., {Johnson}, K.~E., {Kennicutt}, R.~C., {Kim}, H., {Krumholz}, M.~R., {Lennon}, D., {Levay}, K., {Martin}, C., {Nair}, P., {Nota}, A., {{\"O}stlin}, G., {Pellerin}, A., {Prieto}, J., {Regan}, M.~W., {Ryon}, J.~E., {Schaerer}, D., {Schiminovich}, D., {Tosi}, M., {Van Dyk}, S.~D., {Walterbos}, R., {Whitmore}, B.~C., and {Wofford}, A. (2015).
\newblock {Legacy Extragalactic UV Survey (LEGUS) With the Hubble Space Telescope. I. Survey Description}.
\newblock {\em The Astronomical Journal}, 149(2):51.

\bibitem[Citovsky et~al., 2023]{citovsky2023leveraging}
Citovsky, G., DeSalvo, G., Kumar, S., Ramalingam, S., Rostamizadeh, A., and Wang, Y. (2023).
\newblock Leveraging importance weights in subset selection.
\newblock In {\em The Eleventh International Conference on Learning Representations}.

\bibitem[{Grasha} et~al., 2017a]{2017ApJ...840..113G}
{Grasha}, K., {Calzetti}, D., {Adamo}, A., {Kim}, H., {Elmegreen}, B.~G., {Gouliermis}, D.~A., {Dale}, D.~A., {Fumagalli}, M., {Grebel}, E.~K., {Johnson}, K.~E., {Kahre}, L., {Kennicutt}, R.~C., {Messa}, M., {Pellerin}, A., {Ryon}, J.~E., {Smith}, L.~J., {Shabani}, F., {Thilker}, D., and {Ubeda}, L. (2017a).
\newblock {The Hierarchical Distribution of the Young Stellar Clusters in Six Local Star-forming Galaxies}.
\newblock {\em The Astrophysical Journal}, 840(2):113.

\bibitem[{Grasha} et~al., 2017b]{2017ApJ...842...25G}
{Grasha}, K., {Elmegreen}, B.~G., {Calzetti}, D., {Adamo}, A., {Aloisi}, A., {Bright}, S.~N., {Cook}, D.~O., {Dale}, D.~A., {Fumagalli}, M., {Gallagher}, III, J.~S., {Gouliermis}, D.~A., {Grebel}, E.~K., {Kahre}, L., {Kim}, H., {Krumholz}, M.~R., {Lee}, J.~C., {Messa}, M., {Ryon}, J.~E., and {Ubeda}, L. (2017b).
\newblock {Hierarchical Star Formation in Turbulent Media: Evidence from Young Star Clusters}.
\newblock {\em The Astrophysical Journal}, 842(1):25.

\bibitem[{Gregg} et~al., 2024]{2024ApJ...971..115G}
{Gregg}, B., {Calzetti}, D., {Adamo}, A., {Bajaj}, V., {Ryon}, J.~E., {Linden}, S.~T., {Correnti}, M., {Cignoni}, M., {Messa}, M., {Sabbi}, E., {Gallagher}, J.~S., {Grasha}, K., {Pedrini}, A., {Gutermuth}, R.~A., {Melinder}, J., {Kotulla}, R., {P{\'e}rez}, G., {Krumholz}, M.~R., {Bik}, A., {{\"O}stlin}, G., {Johnson}, K.~E., {Bortolini}, G., {Smith}, L.~J., {Tosi}, M., {Maji}, S., and {Faustino Vieira}, H. (2024).
\newblock {Feedback in Emerging Extragalactic Star Clusters, FEAST: The Relation between 3.3 {\ensuremath{\mu}}m Polycyclic Aromatic Hydrocarbon Emission and Star Formation Rate Traced by Ionized Gas in NGC 628}.
\newblock {\em The Astrophysical Journal}, 971(1):115.

\bibitem[Hamilton et~al., 2025]{hamilton2025active}
Hamilton, M., Lai, J., Zhao, W., Maji, S., and Sheldon, D. (2025).
\newblock Active measurement: Efficient estimation at scale.
\newblock In {\em Neural Information Processing Systems (NeurIPS)}.

\bibitem[Horvitz and Thompson, 1952]{bbbe4e97-7833-3faa-820c-4c61f82fe965}
Horvitz, D.~G. and Thompson, D.~J. (1952).
\newblock A generalization of sampling without replacement from a finite universe.
\newblock {\em Journal of the American Statistical Association}, 47(260):663--685.

\bibitem[{Landy} and {Szalay}, 1993]{1993ApJ...412...64L}
{Landy}, S.~D. and {Szalay}, A.~S. (1993).
\newblock {Bias and Variance of Angular Correlation Functions}.
\newblock {\em The Astrophysical Journal}, 412:64.

\bibitem[Lapeer et~al., 2026]{Lapeer_2026}
Lapeer, D., Calzetti, D., Grasha, K., Adamo, A., Elmegreen, B.~G., Bik, A., Bortolini, G., Buckner, A., Cignoni, M., Correnti, M., Elmegreen, D.~M., Faustino~Vieira, H., Hamilton, M., Johnson, K., Lai, T. S.-Y., Linden, S.~T., Maji, S., Messa, M., Östlin, G., Pedrini, A., Sabbi, E., and Smith, L.~J. (2026).
\newblock Feast: Probing hierarchical star formation with the spatial distributions of young star clusters.
\newblock {\em The Astrophysical Journal}, 999(2):189.

\bibitem[Liddle and Loveday, 2008]{Lickgalaxycatalogue}
Liddle, A. and Loveday, J. (2008).
\newblock Lick galaxy catalogue.

\bibitem[Meng et~al., 2022]{Meng_Liu_Neiswanger_Song_Burke_Lobell_Ermon_2022}
Meng, C., Liu, E., Neiswanger, W., Song, J., Burke, M., Lobell, D., and Ermon, S. (2022).
\newblock Is-count: Large-scale object counting from satellite images with covariate-based importance sampling.
\newblock {\em Proceedings of the AAAI Conference on Artificial Intelligence}, 36(11):12034--12042.

\bibitem[{Messa} et~al., 2018]{2018MNRAS.473..996M}
{Messa}, M., {Adamo}, A., {{\"O}stlin}, G., {Calzetti}, D., {Grasha}, K., {Grebel}, E.~K., {Shabani}, F., {Chandar}, R., {Dale}, D.~A., {Dobbs}, C.~L., {Elmegreen}, B.~G., {Fumagalli}, M., {Gouliermis}, D.~A., {Kim}, H., {Smith}, L.~J., {Thilker}, D.~A., {Tosi}, M., {Ubeda}, L., {Walterbos}, R., {Whitmore}, B.~C., {Fedorenko}, K., {Mahadevan}, S., {Andrews}, J.~E., {Bright}, S.~N., {Cook}, D.~O., {Kahre}, L., {Nair}, P., {Pellerin}, A., {Ryon}, J.~E., {Ahmad}, S.~D., {Beale}, L.~P., {Brown}, K., {Clarkson}, D.~A., {Guidarelli}, G.~C., {Parziale}, R., {Turner}, J., and {Weber}, M. (2018).
\newblock {The young star cluster population of M51 with LEGUS - I. A comprehensive study of cluster formation and evolution}.
\newblock {\em Monthly Notices of the Royal Astronomical Society}, 473(1):996--1018.

\bibitem[Midzuno, 1952]{midzuno1952sampling}
Midzuno, H. (1952).
\newblock On the sampling system with probability proportional to sum of sizes.
\newblock {\em Annals of the Institute of Statistical Mathematics}, 3:99--107.

\bibitem[{Pedrini} et~al., 2025]{2025arXiv250901670P}
{Pedrini}, A., {Adamo}, A., {Bik}, A., {Calzetti}, D., {Linden}, S.~T., {Gregg}, B., {Bajaj}, V., {Ryon}, J.~E., {Buckner}, A. S.~M., {Bortolini}, G., {Cignoni}, M., {Correnti}, M., {Duarte-Cabral}, A., {Elmegreen}, B.~G., {Faustino Vieira}, H., {Gallagher}, J.~S., {Grasha}, K., {Johnson}, K.~E., {Krumholz}, M.~R., {Lapeer}, D., {Lai}, T. S.~Y., {Messa}, M., {{\"O}stlin}, G., {Roos}, L., {Smith}, L.~J., and {Tosi}, M. (2025).
\newblock {The near infrared SED of young star clusters in the FEAST galaxies: Missing ingredients at 1-5 $\mu$m}.
\newblock {\em arXiv e-prints}, page arXiv:2509.01670.

\bibitem[{Pedrini} et~al., 2024]{2024ApJ...971...32P}
{Pedrini}, A., {Adamo}, A., {Calzetti}, D., {Bik}, A., {Gregg}, B., {Linden}, S.~T., {Bajaj}, V., {Ryon}, J.~E., {Ali}, A.~A., {Bortolini}, G., {Correnti}, M., {Elmegreen}, B.~G., {Elmegreen}, D.~M., {Gallagher}, J.~S., {Grasha}, K., {Gutermuth}, R.~A., {Johnson}, K.~E., {Melinder}, J., {Messa}, M., {{\"O}stlin}, G., {Sabbi}, E., {Smith}, L.~J., {Tosi}, M., and {Faustino Vieira}, H. (2024).
\newblock {FEAST: Feedback in Emerging extragAlactic Star ClusTers: JWST Spots Polycyclic Aromatic Hydrocarbon Destruction in NGC 628 during the Emerging Phase of Star Formation}.
\newblock {\em The Astrophysical Journal}, 971(1):32.

\bibitem[Peebles, 1975]{peebles1975jagellonian}
Peebles, P. J.~E. (1975).
\newblock Statistical analysis of catalogs of extragalactic objects. vi -- the galaxy distribution in the jagellonian field.
\newblock {\em The Astrophysical Journal}, 196:647--651.

\bibitem[Perez et~al., 2024a]{Perez_Maji_Sheldon_2024}
Perez, G., Maji, S., and Sheldon, D. (2024a).
\newblock Discount: Counting in large image collections with detector-based importance sampling.
\newblock {\em Proceedings of the AAAI Conference on Artificial Intelligence}, 38(20):22294--22302.

\bibitem[P{\'{e}}rez et~al., 2021]{pmcmjas_apj2021}
P{\'{e}}rez, G., Messa, M., Calzetti, D., Maji, S., Jung, D.~E., Adamo, A., and Sirressi, M. (2021).
\newblock {StarcNet}: Machine learning for star cluster identification.
\newblock {\em The Astrophysical Journal}, 907(2):100.

\bibitem[Perez et~al., 2024b]{perez2024human}
Perez, G., Sheldon, D., Van~Horn, G., and Maji, S. (2024b).
\newblock Human-in-the-loop visual re-id for population size estimation.
\newblock In {\em European Conference on Computer Vision}, pages 185--202. Springer.

\bibitem[Raddick et~al., 2009]{raddick2009galaxy}
Raddick, J., Bracey, G., Gay, P., Lintott, C., Murray, P., Schawinski, K., Szalay, A., and Vandenberg, J. (2009).
\newblock Galaxy zoo: Exploring the motivations of citizen science volunteers.
\newblock {\em Astronomy Education Review}, 9.

\bibitem[Sekretenko and Gavrikov, 1998]{SEKRETENKO1998113}
Sekretenko, O. and Gavrikov, V. (1998).
\newblock Characterization of the tree spatial distribution in small plots using the pair correlation function.
\newblock {\em Forest Ecology and Management}, 102(2):113--120.

\bibitem[Seldner et~al., 1977]{seldner1977lick}
Seldner, M., Siebers, B., Groth, E.~J., and Peebles, P. J.~E. (1977).
\newblock New reduction of the lick catalog of galaxies.
\newblock {\em The Astronomical Journal}, 82:249--256, 313--314.

\bibitem[Sen, 1952]{sen1952present}
Sen, A.~R. (1952).
\newblock Present status of probability sampling and its use in the estimation of a characteristic.
\newblock {\em Econometrica}, 20:103.

\bibitem[Shane and Wirtanen, 1967]{shane1967lick}
Shane, C.~D. and Wirtanen, C.~A. (1967).
\newblock {\em Publications of the Lick Observatory}, volume~22.

\bibitem[Sun et~al., 2014]{sun2014reduced}
Sun, Y., Skidmore, A.~K., Wang, T., van Gils, H.~A., Wang, Q., Qing, B., and Ding, C. (2014).
\newblock Reduced dependence of crested ibis on winter-flooded rice fields: Implications for their conservation.
\newblock {\em PLoS One}, 9(5):e98690.

\bibitem[Whitmore et~al., 2021]{whitmore2021phangs}
Whitmore, B.~C., Lee, J.~C., Chandar, R., Thilker, D.~A., Hannon, S., Wei, W., Huerta, E.~A., et~al. (2021).
\newblock Star cluster classification in the phangs--hst survey: Comparison between human and machine learning approaches.
\newblock {\em Monthly Notices of the Royal Astronomical Society}, 506(4):5294--5317.

\bibitem[Wilhelm and Frey, 1996]{PhysRevLett.77.2581}
Wilhelm, J. and Frey, E. (1996).
\newblock Radial distribution function of semiflexible polymers.
\newblock {\em Phys. Rev. Lett.}, 77:2581--2584.

\bibitem[Zwicky et~al., 1968]{zwicky1968catalogue}
Zwicky, F., Herzog, E., and Wild, P. (1968).
\newblock {\em Catalogue of Galaxies and of Clusters of Galaxies}.
\newblock California Institute of Technology, Pasadena.
\newblock (CIT).

\end{thebibliography}

\onecolumn
\aistatstitle{Active Measurement of Two-Point Correlations: \\
Supplementary Materials}

\section{PROOFS}

\subsection{Proof of Proposition 2 for MC Variance}

We can express $\hat f(S)$ in terms of indicator variables for a uniformly sampled $k$-subset:
\begin{equation}
\hat{f}(S) 
= \frac{n(n-1)}{k(k-1)}
  \sum_{\{u,v\}\subseteq S} 
  f(u,v)\,\mathbf 1_{\{u,v\}\in S_k}.
\end{equation}

Thus,
\begin{equation}
\operatorname{Var}[\hat{f}(S)]
= 
\!\left(\frac{n(n-1)}{k(k-1)}\right)^{2}
\sum_{\{u,v\},\{w,z\}\subseteq S}
f(u,v)f(w,z)\,
\operatorname{Cov}\bigl(\mathbf 1_{\{u,v\}\in S_k},\mathbf 1_{\{w,z\}\in S_k}\bigr).
\end{equation}

By definition of covariance,
\begin{equation}
\operatorname{Cov}(\mathbf 1_{\{u,v\}\in S_k},\mathbf 1_{\{w,z\}\in S_k})
=\Pr(\{u,v\},\{w,z\}\in S_k)
-\Pr(\{u,v\}\in S_k)\Pr(\{w,z\}\in S_k).
\end{equation}

The second term is $\bigl(\binom{n-2}{k-2}/\binom{n}{k}\bigr)^{2}$.  
The first term depends on how many distinct elements appear among $u,v,w,z$ (2, 3, or 4).  
By the same counting logic we obtain $\binom{n-2}{k-2}/\binom{n}{k}$, $\binom{n-3}{k-3}/\binom{n}{k}$, or $\binom{n-4}{k-4}/\binom{n}{k}$ for the three cases.

Plugging these in and splitting the sum accordingly:
\begin{equation}
\begin{aligned}
\operatorname{Var}[\hat{f}(S)] 
= &\left( \frac{n(n-1)}{k(k-1)} \right)^{2} 
\Bigg[
\biggl(
\frac{\binom{n-2}{k-2}}{\binom{n}{k}}
-\!\!\left(\frac{\binom{n-2}{k-2}}{\binom{n}{k}}\right)^{2}\biggr)
\sum_{\{u,v\}} f(u,v)^{2}
\\[0.75em]
&\quad+
\biggl(
\frac{\binom{n-3}{k-3}}{\binom{n}{k}}
-\!\!\left(\frac{\binom{n-2}{k-2}}{\binom{n}{k}}\right)^{2}\biggr)
\sum_{|\{u,v,w\}|=3} f(u,v)f(v,w)
\\[0.75em]
&\quad+
\biggl(
\frac{\binom{n-4}{k-4}}{\binom{n}{k}}
-\!\!\left(\frac{\binom{n-2}{k-2}}{\binom{n}{k}}\right)^{2}\biggr)
\sum_{|\{u,v,w,z\}|=4} f(u,v)f(w,z)
\Bigg].
\end{aligned}
\end{equation}

Simplifying terms,
\begin{equation}
\begin{aligned}
\operatorname{Var}[\hat{f}(S)]
=&\left(\frac{n(n-1)}{k(k-1)}-1\right)
\sum_{\{u,v\}} f(u,v)^{2}
\\[0.75em]
&\quad+\left(\frac{n(n-1)}{k(k-1)}\frac{k-2}{n-2}-1\right)
\sum_{|\{u,v,w\}|=3} f(u,v)f(v,w)
\\[0.75em]
&\quad+\left(\frac{n(n-1)}{k(k-1)}
\frac{(k-2)(k-3)}{(n-2)(n-3)}-1\right)
\sum_{|\{u,v,w,z\}|=4} f(u,v)f(w,z).
\end{aligned}
\end{equation}

Counting the terms in each sum:
\begin{enumerate}
    \item The first sum has $\frac{n(n-1)}{2}$ pairs.
    \item For each $u,v$ in the second sum there are $2*(n-2)$ pairs for $w,z$, giving a total of $n(n-1)(n-2)$ ordered triples.
    \item For each $u, v$ in the third sum there are $\frac{(n-2)(n-3)}{2}$ pairs for $w,z$ for a total of $\frac{n(n-1)(n-2)(n-3)}{4}$ quadruples. 
\end{enumerate}
Rewriting in terms of expectations:
\begin{equation}
\begin{aligned}
\operatorname{Var}[\hat{f}(S)] 
=&\left(\frac{n(n-1)}{k(k-1)}-1\right)
\frac{n(n-1)}{2}\,
\mathbb{E}_{\{u,v\}}\bigl[f(u,v)^{2}\bigr]
\\[0.75em]
&\quad+\left(\frac{n(n-1)}{k(k-1)}\frac{k-2}{n-2}-1\right)
n(n-1)(n-2)\,
\mathbb{E}_{|\{u,v,w\}|=3}\bigl[f(u,v)f(v,w)\bigr]
\\[0.75em]
&\quad+\left(\frac{n(n-1)}{k(k-1)}
\frac{(k-2)(k-3)}{(n-2)(n-3)}-1\right)
\frac{n(n-1)(n-2)(n-3)}{4}\,
\mathbb{E}_{|\{u,v,w,z\}|=4}\bigl[f(u,v)f(w,z)\bigr].
\end{aligned}
\end{equation}

Finally, with shorthand
\begin{equation}
\begin{aligned}
E_1&=\mathbb{E}_{\{u,v\}}\bigl[f(u,v)^{2}\bigr],\\
E_2&=\mathbb{E}_{|\{u,v,w\}|=3}\bigl[f(u,v)f(v,w)\bigr],\\
E_3&=\mathbb{E}_{|\{u,v,w,z\}|=4}\bigl[f(u,v)f(w,z)\bigr],
\end{aligned}
\end{equation}
we can simplify and group terms
\begin{equation}
\boxed{\;
\operatorname{Var}[\hat{f}(S)] 
=\frac{n(n-1)(n-k)}{2k(k-1)}
\Bigl(2(E_{2}-E_{3})kn+(E_{1}-4E_{2}+3E_{3})(n+k-1)\Bigr)
\;}
\end{equation}

\subsection{Full Derivation of IS Variance}

We begin from the definition of the importance sampling estimator:
\begin{equation}
    \hat{f}_{\text{IS}}(S) = \frac{G\, f(S_k)}{g(S_k)}, 
    \quad S_k \sim q(S_k) = \frac{g(S_k)}{\binom{n-2}{k-2} G}.
\end{equation}

By definition of variance under the proposal distribution \( q \),
\begin{equation}
    \operatorname{Var}_q[\hat{f}_{\text{IS}}(S)] 
    = G^2\left(\mathbb{E}_q\left[\frac{f^2(S_k)}{g^2(S_k)}\right] 
    - \mathbb{E}_q\left[\frac{f(S_k)}{g(S_k)}\right]^2\right).
\end{equation}

We can rewrite this in terms of uniform expectations to obtain:
\begin{equation}
    \operatorname{Var}_q[\hat{f}_{\text{IS}}(S)]
    = G\frac{n(n-1)}{k(k-1)}\mathbb{E}\left[\frac{f^2(S_k)}{g(S_k)}\right]
    - \left(\frac{n(n-1)}{k(k-1)}\right)^2\mathbb{E}\left[f(S_k)\right]^2,
\end{equation}
where expectations are now taken uniformly over all subsets \( S_k \).

\subsubsection*{A. Second-Order Taylor Expansion of \( h(X,Y) = \frac{X}{Y} \)}

We approximate \( \mathbb{E}[X/Y] \) by expanding \( h(X,Y) = X/Y \) around \( (\mu_X, \mu_Y) = (\mathbb{E}[X], \mathbb{E}[Y]) \) to second order:

\begin{align}
h(X,Y)
&\approx h(\mu_X, \mu_Y)
+ (X-\mu_X)\,h_X(\mu_X,\mu_Y)
+ (Y-\mu_Y)\,h_Y(\mu_X,\mu_Y) \notag \\
&\quad + \frac{1}{2}\left[
(X-\mu_X)^2 h_{XX}(\mu_X,\mu_Y)
+ 2(X-\mu_X)(Y-\mu_Y) h_{XY}(\mu_X,\mu_Y)
+ (Y-\mu_Y)^2 h_{YY}(\mu_X,\mu_Y)
\right].
\end{align}

We compute the partial derivatives:
\begin{align}
h_X &= \frac{1}{Y}, 
&h_Y &= -\frac{X}{Y^2}, \\
h_{XX} &= 0, 
&h_{XY} &= -\frac{1}{Y^2},
&h_{YY} &= \frac{2X}{Y^3}.
\end{align}

Taking expectations, all first-order terms vanish because 
\( \mathbb{E}[X-\mu_X] = \mathbb{E}[Y-\mu_Y] = 0 \). Hence:
\begin{align}
\mathbb{E}[h(X,Y)]
&\approx \frac{\mu_X}{\mu_Y}
+ \frac{1}{2}\Bigl[
0 
+ 2(-\frac{1}{\mu_Y^2})\operatorname{Cov}(X,Y)
+ \frac{2\mu_X}{\mu_Y^3}\operatorname{Var}(Y)
\Bigr] \\
&= \frac{\mu_X}{\mu_Y}
- \frac{\operatorname{Cov}(X,Y)}{\mu_Y^2}
+ \frac{\mu_X\,\operatorname{Var}(Y)}{\mu_Y^3}.
\end{align}

We now apply this expansion to the case \( X = f^2(S_k) \), \( Y = g(S_k) \).

\subsubsection*{B. Taylor Approximation of \( \mathbb{E}\!\left[\frac{f^2(S_k)}{g(S_k)}\right] \)}

By substituting \( X=f^2(S_k) \) and \( Y=g(S_k) \) in the above formula, we obtain:
\begin{equation}
\mathbb{E}\!\left[\frac{f^2(S_k)}{g(S_k)}\right]
\approx 
\frac{\mathbb{E}[f^2(S_k)]}{\mathbb{E}[g(S_k)]}
-\frac{\operatorname{Cov}(f^2(S_k), g(S_k))}{\mathbb{E}[g(S_k)]^2}
+\frac{\mathbb{E}[f^2(S_k)]\,\operatorname{Var}[g(S_k)]}{\mathbb{E}[g(S_k)]^3}.
\end{equation}

Using the definition of covariance,
\begin{align}
\operatorname{Cov}(f^2(S_k), g(S_k))
&= \mathbb{E}[f^2(S_k)g(S_k)]
- \mathbb{E}[f^2(S_k)]\,\mathbb{E}[g(S_k)],
\end{align}
which gives
\begin{equation}
\mathbb{E}\!\left[\frac{f^2(S_k)}{g(S_k)}\right]
\approx 
2\frac{\mathbb{E}[f^2(S_k)]}{\mathbb{E}[g(S_k)]}
-\frac{\mathbb{E}[f^2(S_k)g(S_k)]}{\mathbb{E}[g(S_k)]^2}
+\frac{\mathbb{E}[f^2(S_k)]\,\operatorname{Var}[g(S_k)]}{\mathbb{E}[g(S_k)]^3}.
\end{equation}
We have full access to the predicted $g(S_k)$, so the mean and variance terms involving $g$ can be computed exactly. Thus we only need unbiased estimates of $f^2(S_k)$ and $f^2(S_k)g(S_k)$ to get an unbiased estimate of the entire Taylor expansion.

\subsubsection*{C. Reduced Forms of \( \mathbb{E}[f^2(S_k)] \), \(\mathbb{E}\left[f(S_k)\right]^2\),  and \( \mathbb{E}[f^2(S_k)g(S_k)] \)}

Using indicators, the subset-level functions can be written explicitly as
\[
f(S_k) = \sum_{u<v} f(u,v)\,\mathbf{1}_{u,v}(S_k),
\qquad
g(S_k) = \sum_{u<v} g(u,v)\,\mathbf{1}_{u,v}(S_k).
\]

\paragraph{Expansion of \(f^2(S_k)\).}
Expanding the square gives
\[
f^2(S_k)
= \sum_{u<v} \sum_{w<z}
    f(u,v)f(w,z)\,
    \mathbf{1}_{u,v}(S_k)\mathbf{1}_{w,z}(S_k).
\]
Taking the uniform expectation over all \(k\)-subsets \(S_k\),
\begin{equation}\label{eq:f2-uv-expect}
\mathbb{E}[f^2(S_k)]
= \sum_{\substack{u<v,\,w<z}} 
    f(u,v)f(w,z)\,\Pr(\{u,v,w,z\}\subset S_k).
\end{equation}

The inclusion probabilities depend only on the number of distinct vertices in the union
\(\{u,v,w,z\}\):
\[
t = |\{u,v,w,z\}| \in \{2,3,4\}.
\]
Specifically:
\begin{align*}
C_1 &= \Pr(\{u,v\}\subset S_k) 
    = \frac{\binom{n-2}{k-2}}{\binom{n}{k}}, &&\text{(2 distinct vertices)}\\[4pt]
C_2 &= \Pr(\{u,v,w\}\subset S_k)
    = \frac{\binom{n-3}{k-3}}{\binom{n}{k}}, &&\text{(3 distinct vertices, one shared)}\\[4pt]
C_3 &= \Pr(\{u,v,w,z\}\subset S_k)
    = \frac{\binom{n-4}{k-4}}{\binom{n}{k}}, &&\text{(4 distinct vertices, disjoint pairs)}.
\end{align*}

Grouping the terms of \eqref{eq:f2-uv-expect} by these cases gives
\begin{equation}\label{eq:f2-three-term-uv}
\mathbb{E}[f^2(S_k)]
= C_1 \sum_{u<v} f(u,v)^2
+ C_2 \sum_{\substack{u<v,\,w<z:\\|\{u,v,w\}|=3}} f(u,v)f(v,w)
+ C_3 \sum_{\substack{u<v,\,w<z:\\|\{u,v,w,z\}|=4}} f(u,v)f(w,z).
\end{equation}

These ratios simplify algebraically to
\[
C_1=\frac{k(k-1)}{n(n-1)},\qquad
C_2=\frac{k(k-1)(k-2)}{n(n-1)(n-2)},\qquad
C_3=\frac{k(k-1)(k-2)(k-3)}{n(n-1)(n-2)(n-3)}.
\]

We can also count the number of terms in each sum to see that,
\[
\sum_{u<v} f(u,v)^2 \;=\; \frac{n(n-1)}{2}\,E_1,
\]
\[
\sum_{\substack{u<v,\,w<z\\ |\{u,v,w\}|=3}} f(u,v)f(v,w)
\;=\; n(n-1)(n-2)\,E_2,
\]
\[
\sum_{\substack{u<v,\,w<z\\ |\{u,v,w,z\}|=4}} f(u,v)f(w,z)
\;=\; \frac{n(n-1)(n-2)(n-3)}{4}\,E_3.
\]
Where $E_1$, $E_2$, and $E_3$ are defined as the expectations:

\begin{equation}  
\begin{aligned}  
E_2 &= \mathbb{E}_{\{u,v\}}\bigl[f(u,v)^{2}\bigr], \\  
E_3 &= \mathbb{E}_{|\{u,v,w\}|=3}\bigl[f(u,v)f(v,w)\bigr], \\  
E_4 &= \mathbb{E}_{|\{u,v,w,z\}|=4}\bigl[f(u,v)f(w,z)\bigr].  
\end{aligned}  
\end{equation}  

After substitution and cancellation the result is:

\[
\boxed{%
\mathbb{E}[f^2(S_k)]
= \frac{k(k-1)}{2}\,E_1
\;+\; k(k-1)(k-2)\,E_2
\;+\; \frac{k(k-1)(k-2)(k-3)}{4}\,E_3.
}
\]

\subsection*{Expansion of \(\mathbb{E}[f(S_k)]^2\)}

Using the indicator representation, the uniform expectation of \(f(S_k)\) is
\[
\mathbb{E}[f(S_k)]
= \sum_{u<v} f(u,v)\,\Pr(\{u,v\}\subset S_k)
= C_1\sum_{u<v} f(u,v).
\]

We now square this quantity to obtain \(\mathbb{E}[f(S_k)]^2\):
\begin{align}
\mathbb{E}[f(S_k)]^2
&= C_1^2 \left(\sum_{u<v} f(u,v)\right)^2 \notag\\[4pt]
&= C_1^2 \left(\sum_{\substack{u<v,\,w<z}} f(u,v)f(w,z) \right).
\label{eq:Ef2-expanded}
\end{align}

Split the double sum according to the number of distinct vertices in \(\{u,v,w,z\}\). Thus we can write \(\mathbb{E}[f(S_k)]^2\) in the fully split form
\begin{equation}\label{eq:Ef2-Cs}
\mathbb{E}[f(S_k)]^2
= C_1^2 \Bigg(
\sum_{u<v} f(u,v)^2
\;+\; \sum_{\substack{u<v,\,w<z\\ |\{u,v,w\}|=3}} f(u,v)f(v,w)
\;+\; \sum_{\substack{u<v,\,w<z\\ |\{u,v,w,z\}|=4}} f(u,v)f(w,z)
\Bigg).
\end{equation}
Following similar substitutions as before,

\begin{equation}
\boxed{
\mathbb{E}[f(S_k)]^2
= \frac{k^2(k-1)^2}{n(n-1)}\left(\frac{1}{2}E_1 + (n-2)E_2 + \frac{(n-2)(n-3)}{4}E_3\right)}
\end{equation}

\paragraph{Expansion of \(f^2(S_k)g(S_k)\).}
Similarly, expand the product \(f^2(S_k)g(S_k)\):
\[
f^2(S_k)g(S_k)
= \sum_{u<v}\sum_{w<z}\sum_{a<b}
    f(u,v)f(w,z)g(a,b)\,
    \mathbf{1}_{u,v}(S_k)\mathbf{1}_{w,z}(S_k)\mathbf{1}_{a,b}(S_k).
\]
Taking expectation gives
\begin{equation}\label{eq:f2g-uvwzab}
\mathbb{E}[f^2(S_k)g(S_k)]
= \sum_{u<v}\sum_{w<z}\sum_{a<b}
    f(u,v)f(w,z)g(a,b)\;
    \Pr\big(\{u,v,w,z,a,b\}\subset S_k\big).
\end{equation}

Let
\[
t = |\{u,v,w,z,a,b\}|
\]
denote the number of distinct vertices covered by the three pairs \((u,v), (w,z), (a,b)\).
Clearly \(t\in\{2,3,4,5,6\}\), and the inclusion probability depends only on \(t\):
\[
\Pr\big(\{u,v,w,z,a,b\}\subset S_k\big)
= \frac{\binom{n-t}{k-t}}{\binom{n}{k}}.
\]

Grouping \eqref{eq:f2g-uvwzab} by \(t\) gives the exact decomposition
\begin{equation}\label{eq:f2g-t}
\mathbb{E}[f^2(S_k)g(S_k)]
= \sum_{t=2}^{6} \frac{\binom{n-t}{k-t}}{\binom{n}{k}}
   \sum_{\substack{u<v,\,w<z,\,a<b\\|\{u,v,w,z,a,b\}|=t}}
        f(u,v)f(w,z)g(a,b).
\end{equation}

Each term corresponds to a specific overlap pattern among the three pairs:
\begin{itemize}
  \item \(t=2:\) all three pairs are identical \((u,v)=(w,z)=(a,b)\);
  \item \(t=3:\) three pairs covering exactly three vertices (e.g.\ a star or triangle);
  \item \(t=4:\) configurations where the three pairs span four distinct vertices;
  \item \(t=5:\) configurations sharing exactly one vertex across otherwise disjoint pairs;
  \item \(t=6:\) all three pairs disjoint.
\end{itemize}

Define \(D_t\) as the average of \(f(u,v)f(w,z)g(a,b)\) over all ordered triplets \((u<v,w<z,a<b)\) whose union has size \(t\), and let \(M_t\) be the corresponding number of ordered triplets. Then
\[
\sum_{\substack{u<v,\,w<z,\,a<b\\|\{u,v,w,z,a,b\}|=t}} f(u,v)f(w,z)g(a,b) = M_t\,D_t,
\]
and
\[
\mathbb{E}[f^2(S_k)g(S_k)]
= \sum_{t=2}^{6} \frac{\binom{n-t}{k-t}}{\binom{n}{k}}\,M_t\,D_t.
\]

The integer multiplicities \(M_t\) (number of ordered triplets of pairs whose union covers exactly \(t\) distinct vertices) are:
\[
\begin{aligned}
M_2 &= \binom{n}{2} \;=\; \frac{n(n-1)}{2},\\[4pt]
M_3 &= 24\binom{n}{3} \;=\; 4\,n(n-1)(n-2),\\[4pt]
M_4 &= 114\binom{n}{4} \;=\; \frac{19}{4}\,n(n-1)(n-2)(n-3),\\[4pt]
M_5 &= 180\binom{n}{5} \;=\; \tfrac{3}{2}\,n(n-1)(n-2)(n-3)(n-4),\\[4pt]
M_6 &= 90\binom{n}{6} \;=\; \tfrac{1}{8}\,n(n-1)(n-2)(n-3)(n-4)(n-5).
\end{aligned}
\]
Writing the full expression out and simplifying,
\[
\boxed{%
\mathbb{E}\big[f^2(S_k)g(S_k)\big]
= k(k-1)\Bigg[
\frac{1}{2}D_2
+(k-2)\Bigg(
4D_3
+(k-3)\Bigg(
\frac{19}{4}D_4
+(k-4)\Big(\frac{3}{2}D_5+\frac{k-5}{8}D_6\Big)
\Bigg)\Bigg)\Bigg].
}
\]

\subsection*{Derivation of triplet multiplicities \(M_t\) (ordered triplets of pairs)}

Fix a vertex set \(V\) of size \(t\), \(2\le t\le 6\).  
We ask: how many ordered triplets of unordered pairs
\((u<v,\;w<z,\;a<b)\) chosen from the pairs on \(V\) have the property that
the union of the three pairs equals \(V\) (i.e. covers exactly the \(t\) vertices)?

Let \(P(t)=\binom{t}{2}\) denote the number of unordered pairs inside a
\(t\)-vertex set. The total number of ordered triplets of pairs drawn from \(V\)
(with repetition allowed) is \(P(t)^3\). We must subtract those triplets that
lie entirely inside some proper subset $U \subset V$.  Inclusion--exclusion
over which vertices of \(V\) are omitted gives a compact, general formula.

For \(i=0,1,\dots,t-2\) choose \(i\) vertices of \(V\) to omit; then the
intersection of those choices is a set of size \(t-i\).  The number of ordered
triplets of pairs contained in that \((t-i)\)-vertex intersection equals
\(\binom{t-i}{2}^3\). There are \(\binom{t}{i}\) ways to choose which \(i\)
vertices to omit. By inclusion--exclusion the number of ordered triplets of
pairs whose union is exactly \(V\) is
\[
\sum_{i=0}^{\,t-2} (-1)^i \binom{t}{i}\,\binom{t-i}{2}^3.
\]
(We stop at \(i=t-2\) because when \(t-i<2\) there are no pairs.)

Thus the per-\(t\)-set ordered-triplet count equals the above sum, and the
total number across the \(n\)-vertex ground set is that per-\(t\)-set count
times the number of choices of the \(t\)-set, \(\binom{n}{t}\). Hence
\[
M_t \;=\; \binom{n}{t}\;\sum_{i=0}^{\,t-2} (-1)^i \binom{t}{i}\,\binom{t-i}{2}^3,
\qquad t=2,\dots,6.
\]
Evaluating for each $t$ gives the values above.

\subsection{Unbiased Estimators for Pair-wise Expectations}

\paragraph{A. Proof of Proposition 3: Estimating $E_i$ for MC}
In the main text we claim that for the simple MC the sample mean is an unbiased estimate of each $E_i$, giving the following estimator:

\begin{equation}
\boxed{
    \hat{E_i} = \frac{1}{n_i}\sum_{\substack{\{u,v,w,z\} \subseteq S_k \\
    u \neq v, w \neq z\\
    |\{u,v,w,z\}| = i}} f(u,v)f(w,z), 
\quad S_k \sim U.}
\end{equation}
where $n_i$ is the number of terms in the sum. To prove this, we rewrite in terms of indicators over uniformly sampled $k$-subsets,

\begin{equation}
    \mathbb{E}[\hat{E_i}] = \mathbb{E}\bigg[\frac{1}{n_i}\sum_{\substack{\{u,v,w,z\} \subseteq S \\
    u \neq v, w \neq z\\
    |\{u,v,w,z\}| = i}} f(u,v)f(w,z)\mathbf{1}_{u,v}(S_k)\mathbf{1}_{w,z}(S_k)\bigg], 
\quad S_k \sim U.
\end{equation}

Case $i=2$

This corresponds to the same pair repeated: $(u,v)=(w,z)$.

\[
\mathbb{E}[\hat{E_2}] 
= \frac{1}{n_2}\sum_{u<v} f(u,v)^2 \, \mathbb{E}[\mathbf{1}_{u,v}(S_k)].
\]
Under uniform sampling of $k$ elements from $n$,
\[
\mathbb{E}[\mathbf{1}_{u,v}(S_k)] 
= \Pr(u,v \in S_k)
= \frac{\binom{n-2}{k-2}}{\binom{n}{k}} 
= \frac{k(k-1)}{n(n-1)}.
\]
Also,
\[
n_2 = \binom{k}{2}.
\]
Substituting:
\[
\mathbb{E}[\hat{E_2}] 
= \frac{1}{\binom{k}{2}} \sum_{u<v} f(u,v)^2 \frac{k(k-1)}{n(n-1)}
= \frac{1}{\binom{n}{2}}\sum_{u<v} f(u,v)^2
= E_2.
\]
Thus $\hat{E_2}$ is unbiased.

Case $i=3$

This corresponds to pairs sharing exactly one vertex, e.g.\ $(u,v)$ and $(u,w)$ with $u,v,w$ distinct.

The estimator is
\[
\hat{E_3}
= \frac{1}{n_3}\sum_{\substack{u,v,w\,\text{distinct}}} f(u,v)f(u,w)\,
\mathbf{1}_{u,v}(S_k)\mathbf{1}_{u,w}(S_k).
\]
Take expectation:
\[
\mathbb{E}[\hat{E_3}]
= \frac{1}{n_3}\sum_{u,v,w} f(u,v)f(u,w)\,
\mathbb{E}[\mathbf{1}_{u,v}(S_k)\mathbf{1}_{u,w}(S_k)].
\]
The two pairs share three distinct vertices $\{u,v,w\}$, so
\[
\Pr(u,v,w \in S_k)
= \frac{\binom{n-3}{k-3}}{\binom{n}{k}}
= \frac{k(k-1)(k-2)}{n(n-1)(n-2)}.
\]
The number of such terms is
\[
n_3 = k(k-1)(k-2).
\]
Substitute:
\[
\mathbb{E}[\hat{E_3}]
= \frac{1}{k(k-1)(k-2)} 
\sum_{u,v,w} f(u,v)f(u,w)
\frac{k(k-1)(k-2)}{n(n-1)(n-2)}
= \frac{1}{n(n-1)(n-2)}\sum_{u,v,w} f(u,v)f(u,w)
= E_3.
\]
Hence $\hat{E_3}$ is unbiased.

Case $i=4$

This corresponds to two disjoint pairs $(u,v)$ and $(w,z)$ with all vertices distinct.

\[
\hat{E_4}
= \frac{1}{n_4}\sum_{\substack{u,v,w,z\,\text{distinct}}}
f(u,v)f(w,z)\,\mathbf{1}_{u,v}(S_k)\mathbf{1}_{w,z}(S_k).
\]
Take expectation:
\[
\mathbb{E}[\hat{E_4}]
= \frac{1}{n_4}\sum_{u,v,w,z} f(u,v)f(w,z)\,
\Pr(u,v,w,z \in S_k).
\]
Since there are four distinct vertices,
\[
\Pr(u,v,w,z \in S_k)
= \frac{\binom{n-4}{k-4}}{\binom{n}{k}}
= \frac{k(k-1)(k-2)(k-3)}{n(n-1)(n-2)(n-3)}.
\]
The number of disjoint-pair terms is
\[
n_4 = \frac{k(k-1)(k-2)(k-3)}{4}.
\]
Substituting,
\[
\mathbb{E}[\hat{E_4}]
= \frac{4}{n(n-1)(n-2)(n-3)}
\sum_{u,v,w,z} f(u,v)f(w,z)
= E_4.
\]
Therefore $\hat{E_4}$ is unbiased.

In each case,
\[
\mathbb{E}[\hat{E_i}] = E_i, \quad i=2,3,4.
\]
Thus the sample mean over uniformly drawn $k$-subsets is an unbiased estimator of the corresponding population moment.

\paragraph{B. Estimating $E_i$ for IS}
For the IS estimator, the $k$-subsets are not drawn uniformly, so we cannot directly estimate the uniform expectations $E_i$ and $D_i$ using sample means. Instead, we employ the horvitz-thompson estimators,
\begin{equation}
\boxed{
    \hat{E}_i^{\text{IS}} = \frac{1}{N_i}\sum_{\substack{u,v,w,z \subseteq S_k \\
    u \neq v, w \neq z\\
    |\{u,v,w,z\}| = i}} \frac{f(u,v)f(w,z)}{\pi(u,v,w,z)}, 
\quad S_k \sim q,}
\end{equation}

where $\pi(u,v,w,z) = \Pr(u,v,w,z \in S_k)$, and $N_i$ is the number of such pairs across the entire dataset. The unbiasedness of this estimator is easy to show:

\begin{equation}
\begin{aligned}
\mathbb{E}[\hat{E}_i^{\text{IS}}] 
&= \frac{1}{N_i}
\sum_{\substack{u,v,w,z \subseteq S \\ u \neq v,\, w \neq z \\ |\{u,v,w,z\}| = i}}
\frac{f(u,v)f(w,z)}{\pi(u,v,w,z)}\,
\mathbb{E}[\mathbf{1}_{u,v}(S_k)\mathbf{1}_{w,z}(S_k)] \\[6pt]
&= \frac{1}{N_i}
\sum_{\substack{u,v,w,z \subseteq S \\ u \neq v,\, w \neq z \\ |\{u,v,w,z\}| = i}}
\frac{f(u,v)f(w,z)}{\pi(u,v,w,z)}\,\pi(u,v,w,z) \\[6pt]
&= \frac{1}{N_i}
\sum_{\substack{u,v,w,z \subseteq S \\ u \neq v,\, w \neq z \\ |\{u,v,w,z\}| = i}}
f(u,v)f(w,z) = E_i.
\end{aligned}
\end{equation}

To calculate $\pi(u,v,w,z)$ explicitly, we can sum over all $k$-subsets containing $u,v,w,z$. Recall that the proposal is defined as,
\[
q(S_k) \;=\; \frac{g(S_k)}{\binom{n-2}{k-2}\,G},
\qquad
G := \sum_{\substack{a<b\\ a,b\in S}} g(a,b),
\]

Let \(R=\{u,v,w,z\}\) and $t=|R|$ be the number of unique vertices. Taking the sum over all $k$-subsets containing $u,v,w,z$,
\[
\pi(u,v,w,z)
= \sum_{S_k:\;R\subset S_k} q(S_k)
= \frac{1}{\binom{n-2}{k-2}\,G}\sum_{S_k:\;R\subset S_k} g(S_k).
\]

Every \(k\)-subset \(S_k\) that contains \(R\) can be written as
\(S_k=R\cup S'\) with \(S'\subset S\setminus R\).
Write \(g(S_k)=\sum_{a<b,\,a,b\in S_k} g(a,b)\) and split the inner pair-sum
into three disjoint types: pairs entirely inside \(R\), cross pairs with one
endpoint in \(R\) and one in \(S'\), and pairs entirely inside \(S'\). Hence

\[
\begin{aligned}
\sum_{S_k:\;R\subset S_k} g(S_k)
&= \sum_{\substack{S'\subset S\setminus R\\|S'|=k-t}}
    \sum_{\substack{a<b\\ a,b\in R\cup S'}} g(a,b) \\[4pt]
&= \binom{n-t}{k-t}\sum_{\substack{a<b\\ a,b\in R}} g(a,b)
   \;+\; \binom{n-t-1}{k-t-1}\sum_{\substack{r\in R\\ x\notin R}} g(r,x) \\[4pt]
&\qquad\qquad\qquad\qquad
   +\; \binom{n-t-2}{k-t-2}\sum_{\substack{x<y\\ x,y\notin R}} g(x,y).
\end{aligned}
\]

Substituting into the expression for \(\pi(u,v,w,z)\) gives

\[
\boxed{
\pi(u,v,w,z)
= \frac{1}{\binom{n-2}{k-2}\,G}\Bigg[
\binom{n-t}{k-t}\,\sum_{\substack{a<b\\ a,b\in R}} g(a,b)
+ \binom{n-t-1}{k-t-1}\,\sum_{\substack{r\in R\\ x\notin R}} g(r,x)
+ \binom{n-t-2}{k-t-2}\,\sum_{\substack{x<y\\ x,y\notin R}} g(x,y)
\Bigg].}
\]
We can compute this efficiently for all pairs of pairs by pre-computing the overall sum $G$ and all per-vertex sums. Using inclusion-exclusion arguments we can then compute $\pi(u,v,w,z)$ in constant time.

\paragraph{C. Estimating $D_i$ for IS}
We can also use horvitz-thompson estimators for $D_i$. The naive approach would be,
\begin{equation}
    \frac{1}{M_i}\sum_{\substack{u,v,w,z,a,b \subseteq S_k \\
    u \neq v, w \neq z, a \neq b\\
    |\{u,v,w,z,a,b\}| = i}} \frac{f(u,v)f(w,z)g(a,b)}{\pi(u,v,w,z,a,b)}, 
\quad S_k \sim q,
\end{equation}
where $M_i$ is the number of triplets of pairs as defined previously. However, we know the value of $g(a,b)$ for all pairs since it is an estimated count. Thus the probability of seeing a particular triplet only depends on the first pair of pairs $u,v,w,z$ being in $S_k$, which is the same $\pi(u,v,w,z)$ as before. This gives our $D_i$ estimator,
\begin{equation}
    \boxed{
    \hat{D}_i^{\text{IS}} = \frac{1}{M_i}\sum_{\substack{u,v,w,z,a,b \subseteq S_k \\
    u \neq v, w \neq z, a \neq b\\
    |\{u,v,w,z,a,b\}| = i}} \frac{f(u,v)f(w,z)g(a,b)}{\pi(u,v,w,z)}, 
\quad S_k \sim q,}
\end{equation}

with $\pi(u,v,w,z)$ defined above and $M_i$ defined at the end of section 1.2.C.

Since only the $g(a,b)$ term depends on $a$ and $b$, we can move the sum over $a,b$ inside. Similar to our trick for $\pi(u,v,w,z)$, we can use inclusion-exclusion arguments with pre-computed sums to calculate the $\sum_{a,b} g(a,b)$ term in constant time. Thus computing $\hat{D}_i^{\text{IS}}$ has the same time complexity as $\hat{E}_i^{\text{IS}}$, which is quadratic in the number of true edges.

\paragraph{Note on Time Complexity}
Although the proposed Horvitz–Thompson estimators require quadratic time in the number of true edges, this computation does not need to be repeated at every iteration. In practice, the quantities $E_i$ and $D_i$ can be estimated once and reused across iterations with different values of $k$. Since the variance estimate is a linear function of $E_i$ and $D_i$, each subsequent iteration then requires only constant computation. Moreover, alternative unbiased estimators for $E_i$ and $D_i$ could further reduce the computational cost (for instance, by randomly sampling pairs of pairs rather than evaluating all of them) though this may introduce a tradeoff between speed and accuracy.

\section{TRAINING DETAILS AND COMPUTE}
We use the StarcNet model as our classifier. The number of input channels is reduced to 8 to match the JWST dataset, while the rest of the architecture remains unchanged. The model is trained using input patches of 24 pixels, a batch size of 64, a learning rate of 1e-4, and for a total of 20 epochs. We use the Adam optimizer and decrease the learning rate to 1e-5 after 10 epochs. These settings closely match those used in the StarcNet codebase.

We perform 10-fold cross-validation: the dataset is divided into 10 folds, and the model is trained 10 times, each time using 9 folds for training and 1 distinct fold for validation. The predictions from each held-out fold are then aggregated across all 10 trained models to produce the final classifier outputs. Training all models takes less than two hours on a single NVIDIA 2080 Ti GPU.

Our estimation experiments were performed on a CPU cluster. For the larger NGC 628 galaxy, running just the MC and IS estimators took about 10 minutes per trial on a single CPU core. Computing the variance estimates increased the total time to about 20 minutes and 90 minutes respectively. For the NGC 4449 galaxy, each trial took at most a few minutes.

\section{ADDITIONAL RESULTS}
\subsection{Coverage and Radius Plots}
Here we present coverage and radius plots for all 13 bins for NGC 4449 and 14 bins for NGC628. We see good coverage in most cases, with poor coverage occurring when the number of edges is too small to get a reliable variance estimate. We also see that the delta method approximation reduces the coverage for the IS estimator, although it still converges to the optimal 95\% given sufficient data. The count of edges in each bin spans multiple orders of magnitude, with the largest bins having between 100x and 1000x the edges of the smallest. Refer to Figure 1 in the main text for the ground truth edge count plotted on a log scale.
\begin{figure}[htbp]
    \centering
    \includegraphics[width=0.43\linewidth]{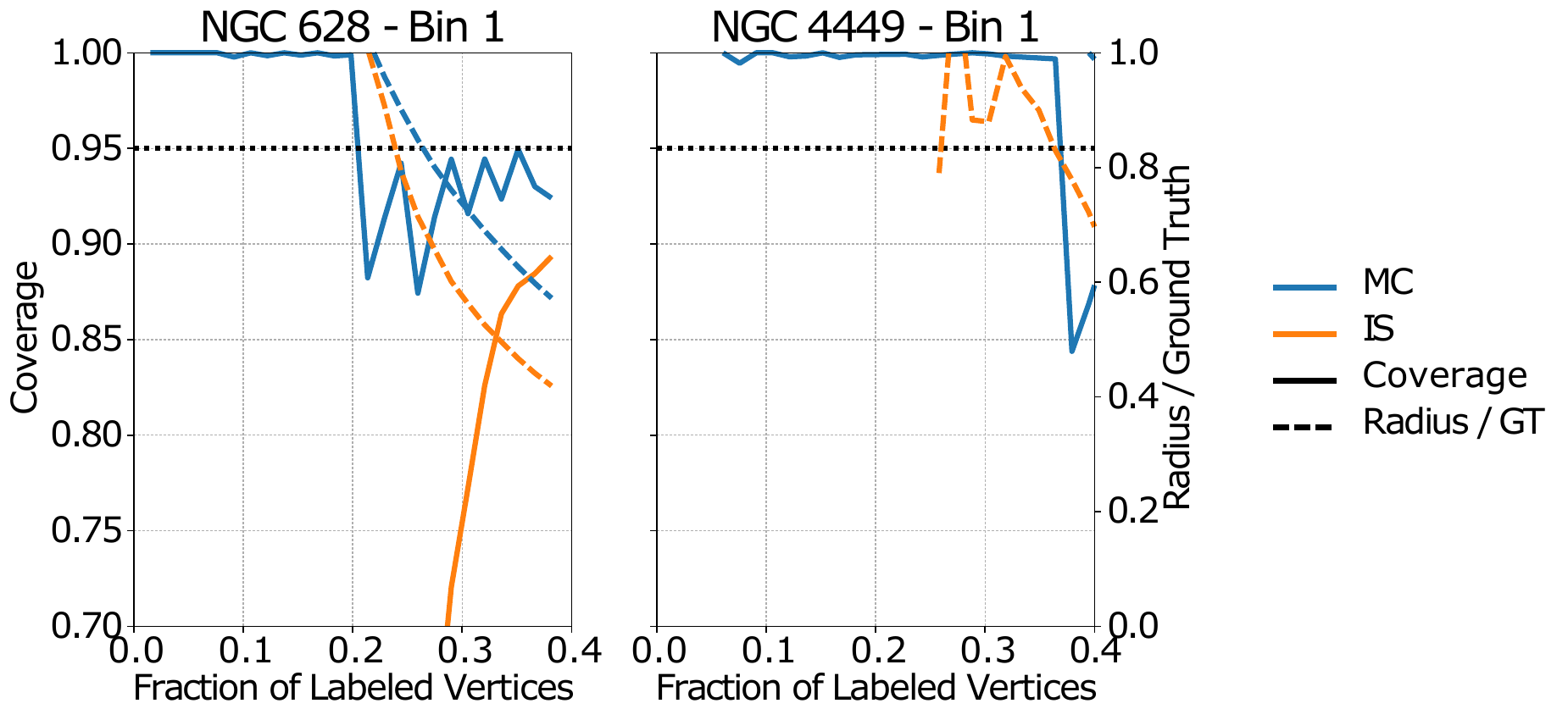}
    \includegraphics[width=0.43\linewidth]{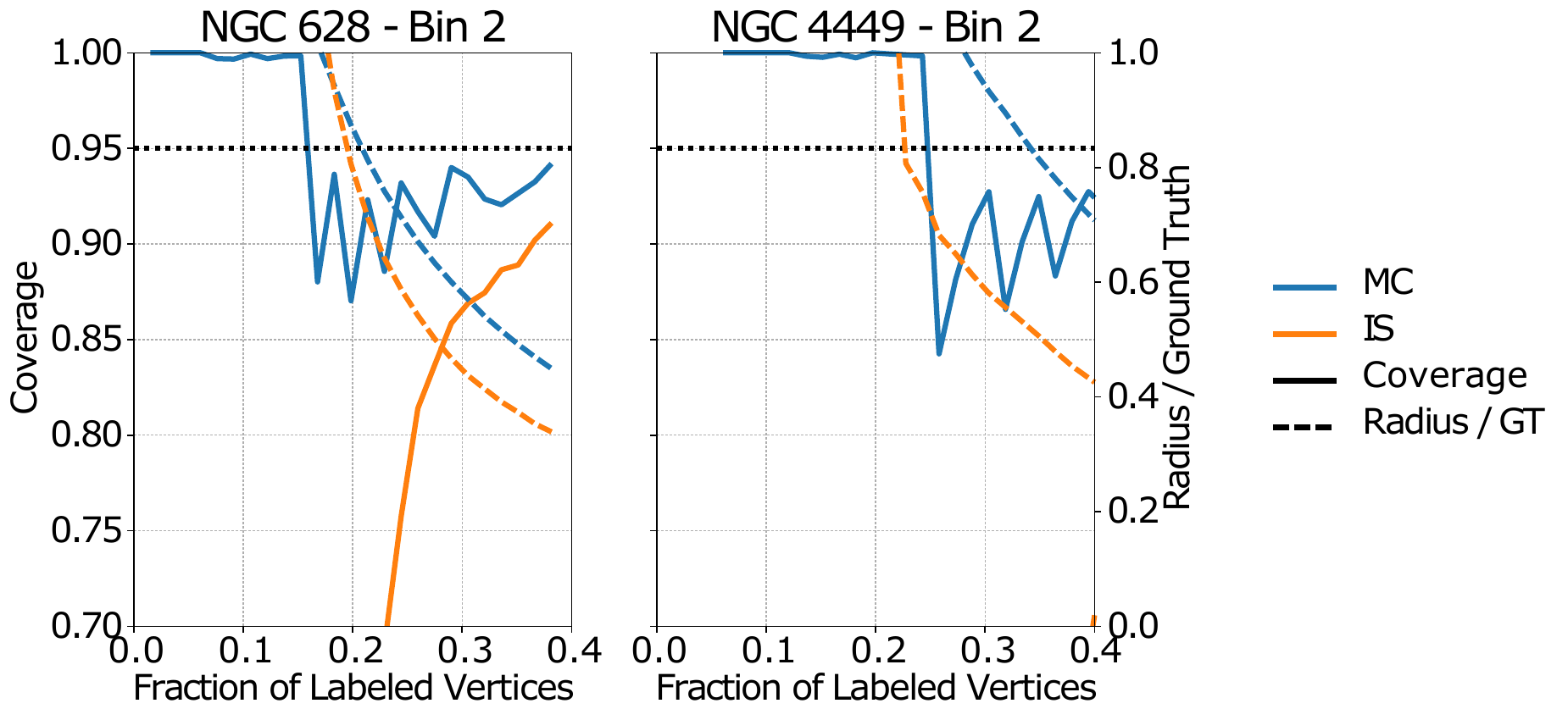}
    \includegraphics[width=0.43\linewidth]{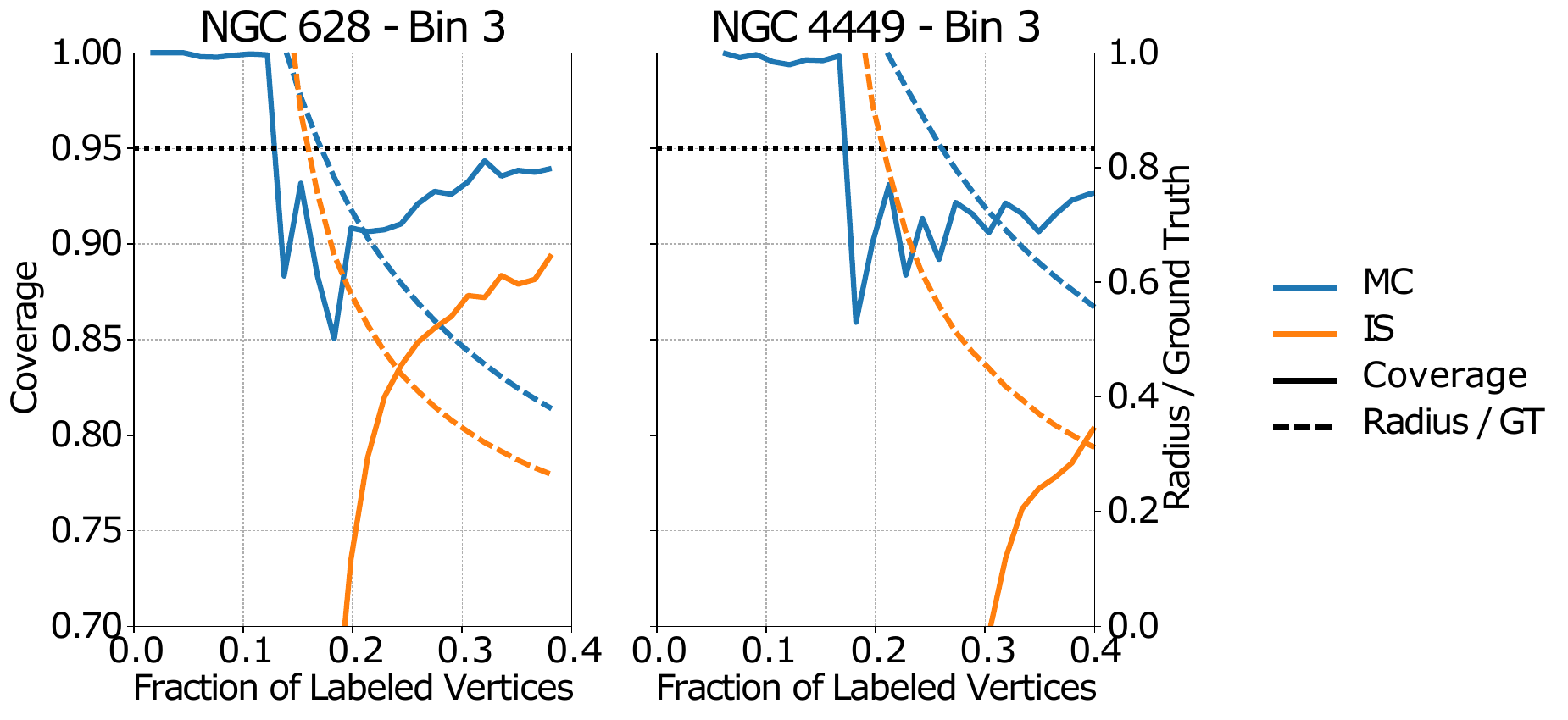}
    \includegraphics[width=0.43\linewidth]{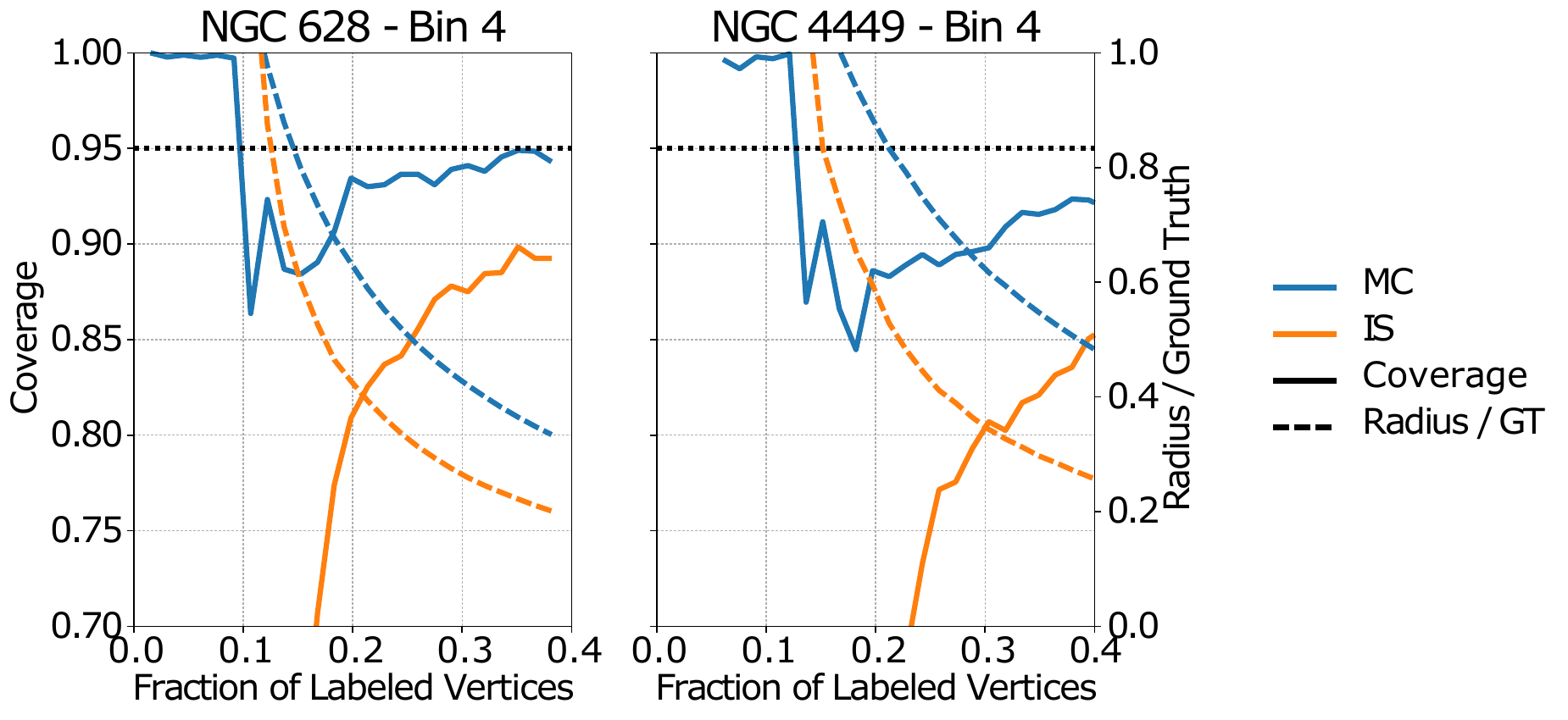}
    \includegraphics[width=0.43\linewidth]{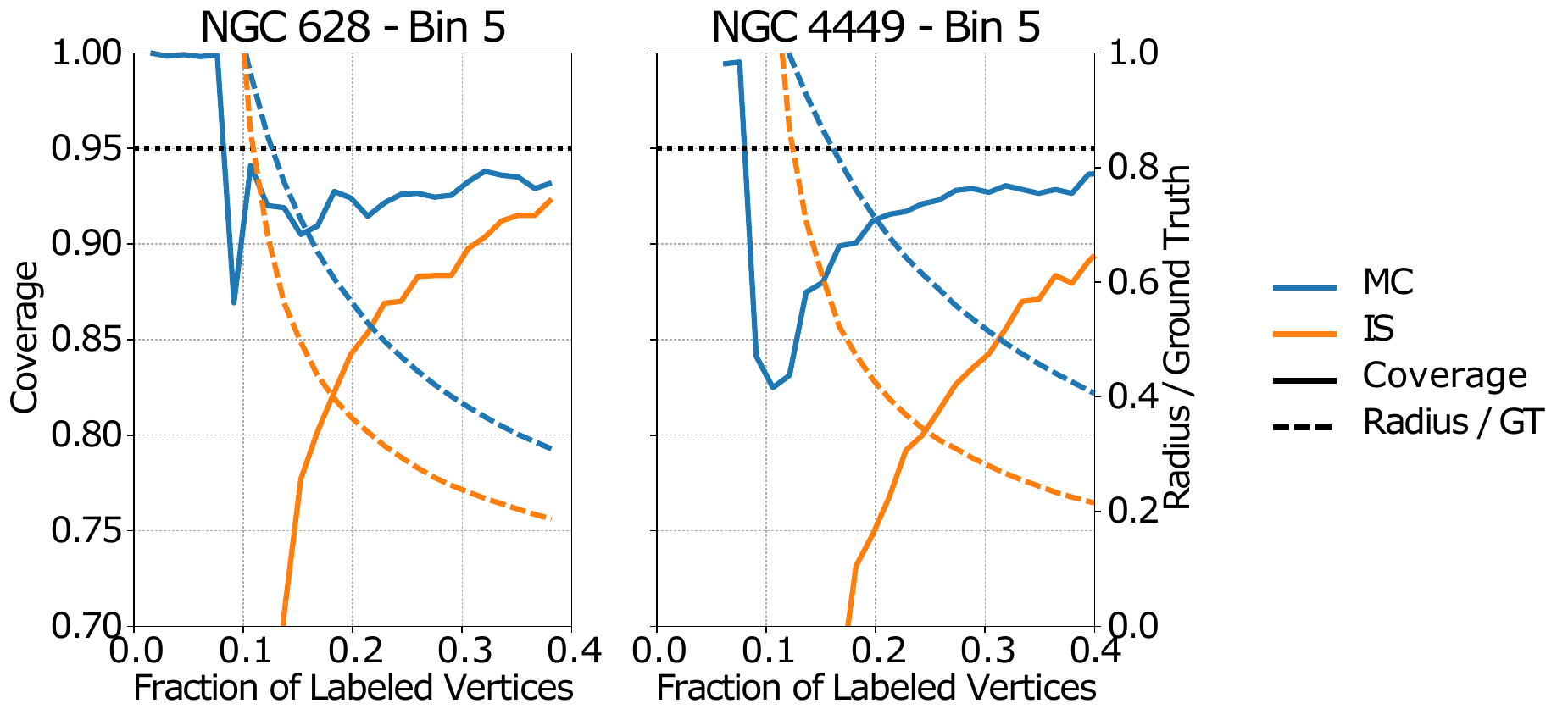}
    \includegraphics[width=0.43\linewidth]{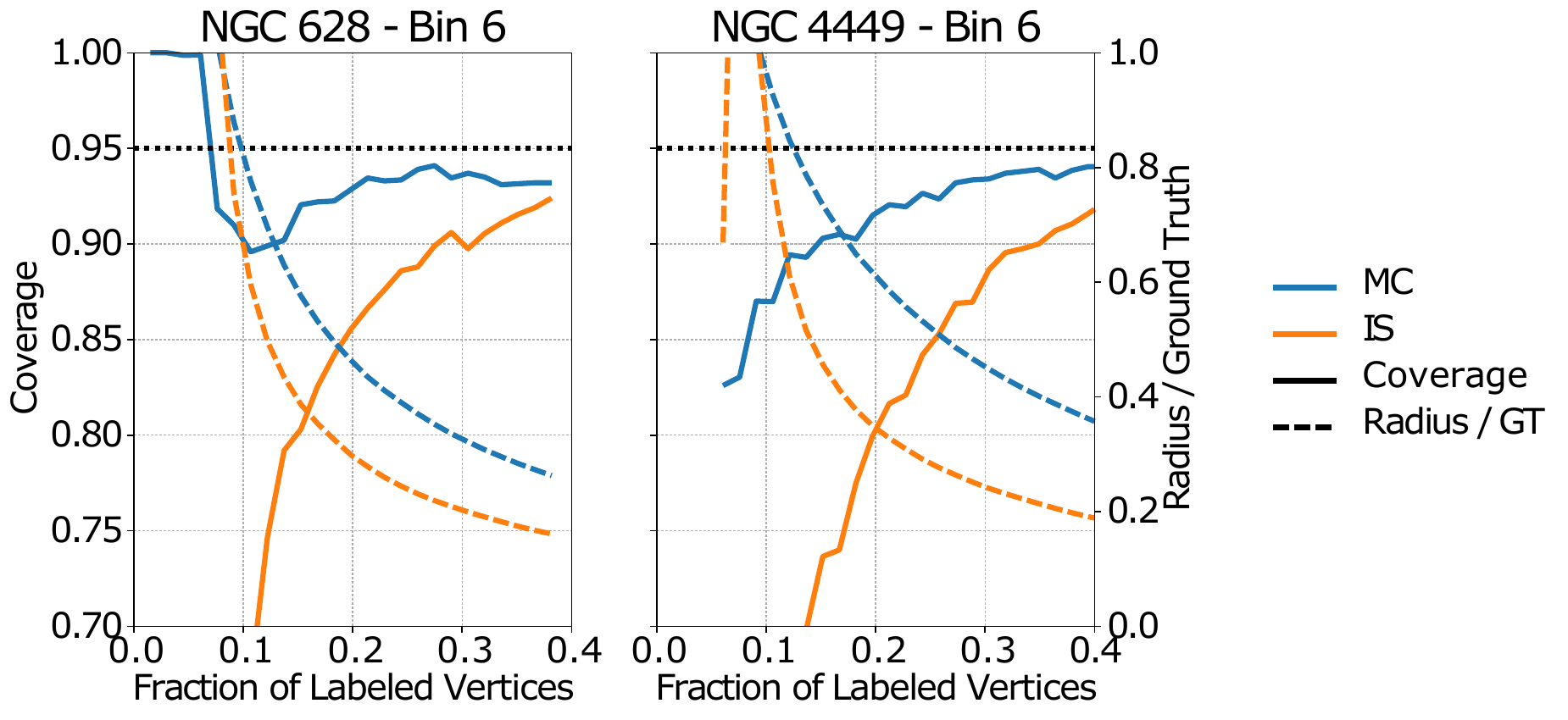}
    \includegraphics[width=0.43\linewidth]{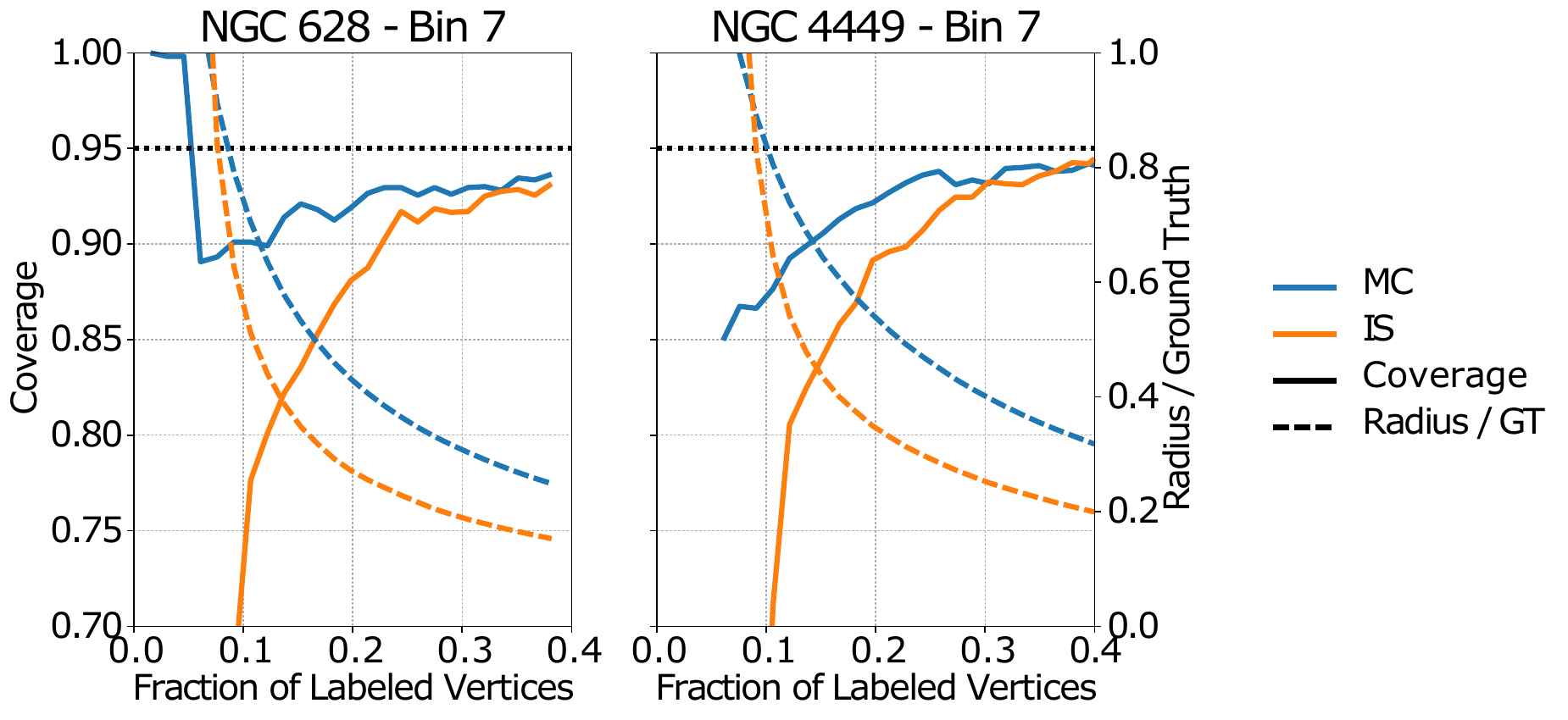}
    \includegraphics[width=0.43\linewidth]{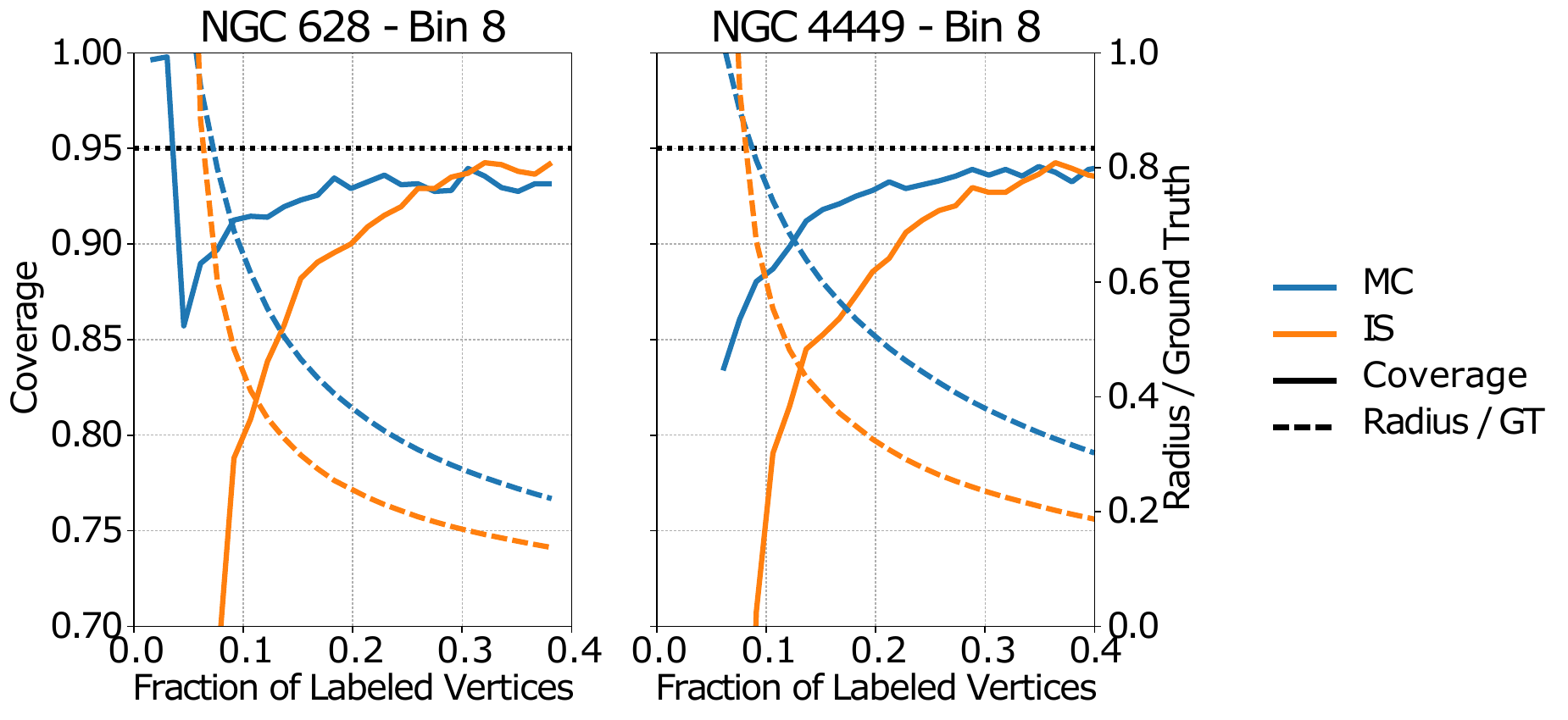}
    \includegraphics[width=0.43\linewidth]{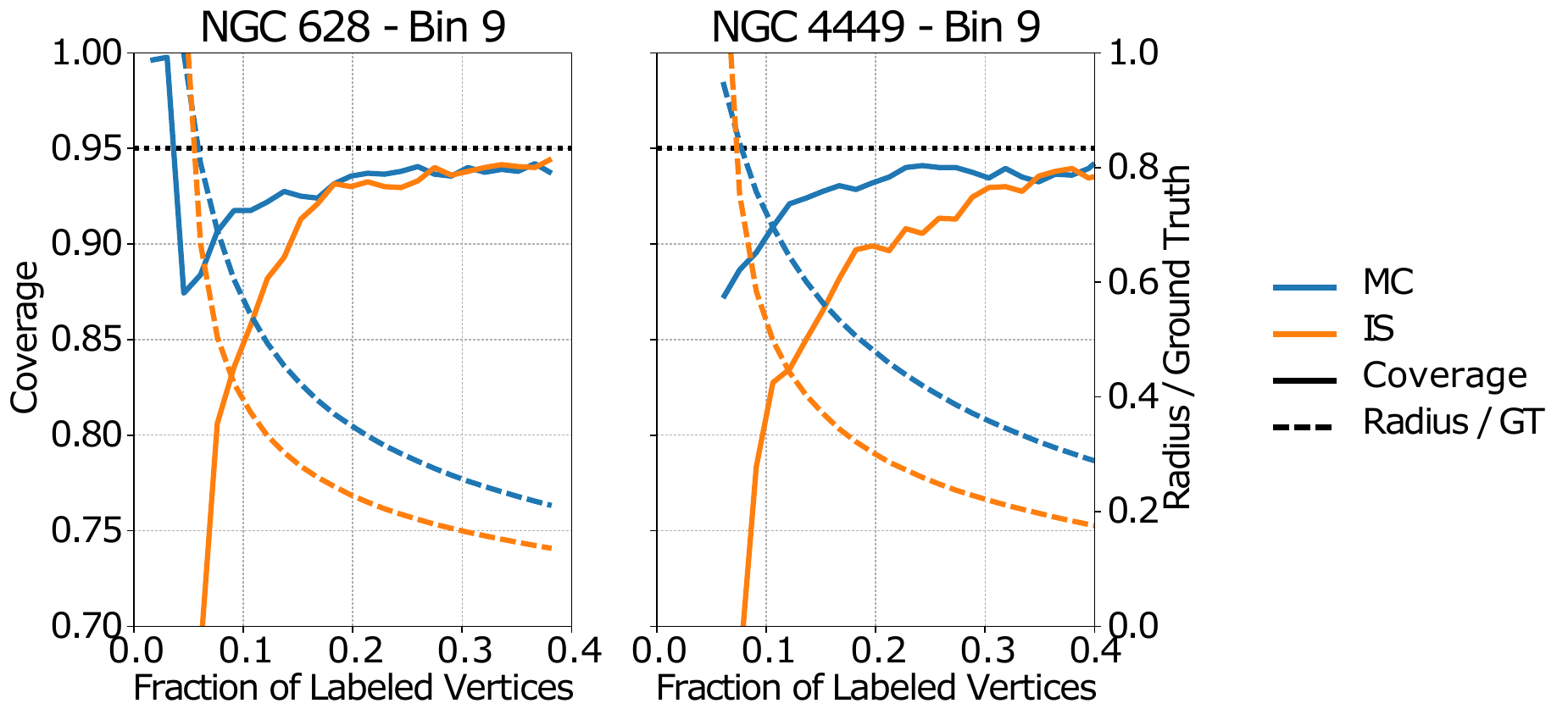}
    \includegraphics[width=0.43\linewidth]{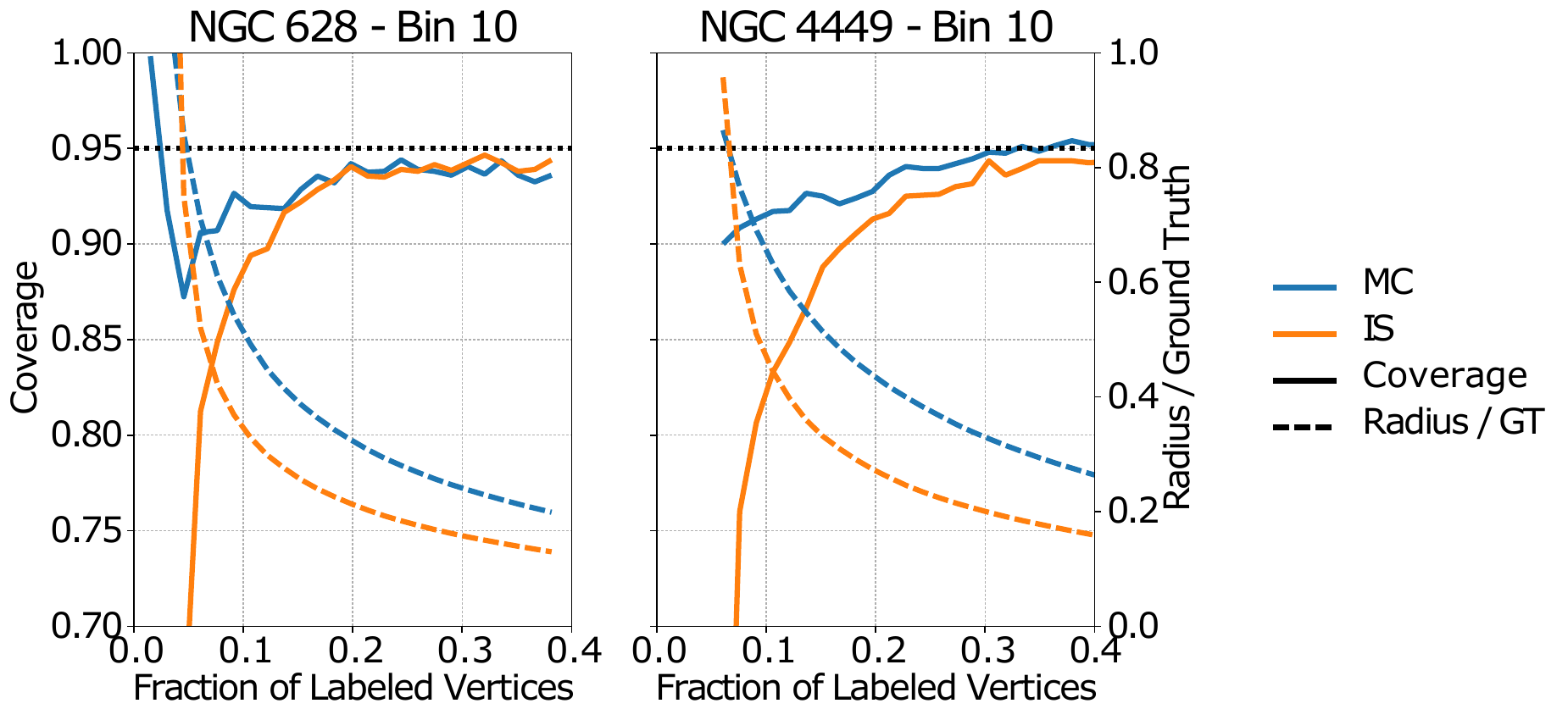}
    \includegraphics[width=0.43\linewidth]{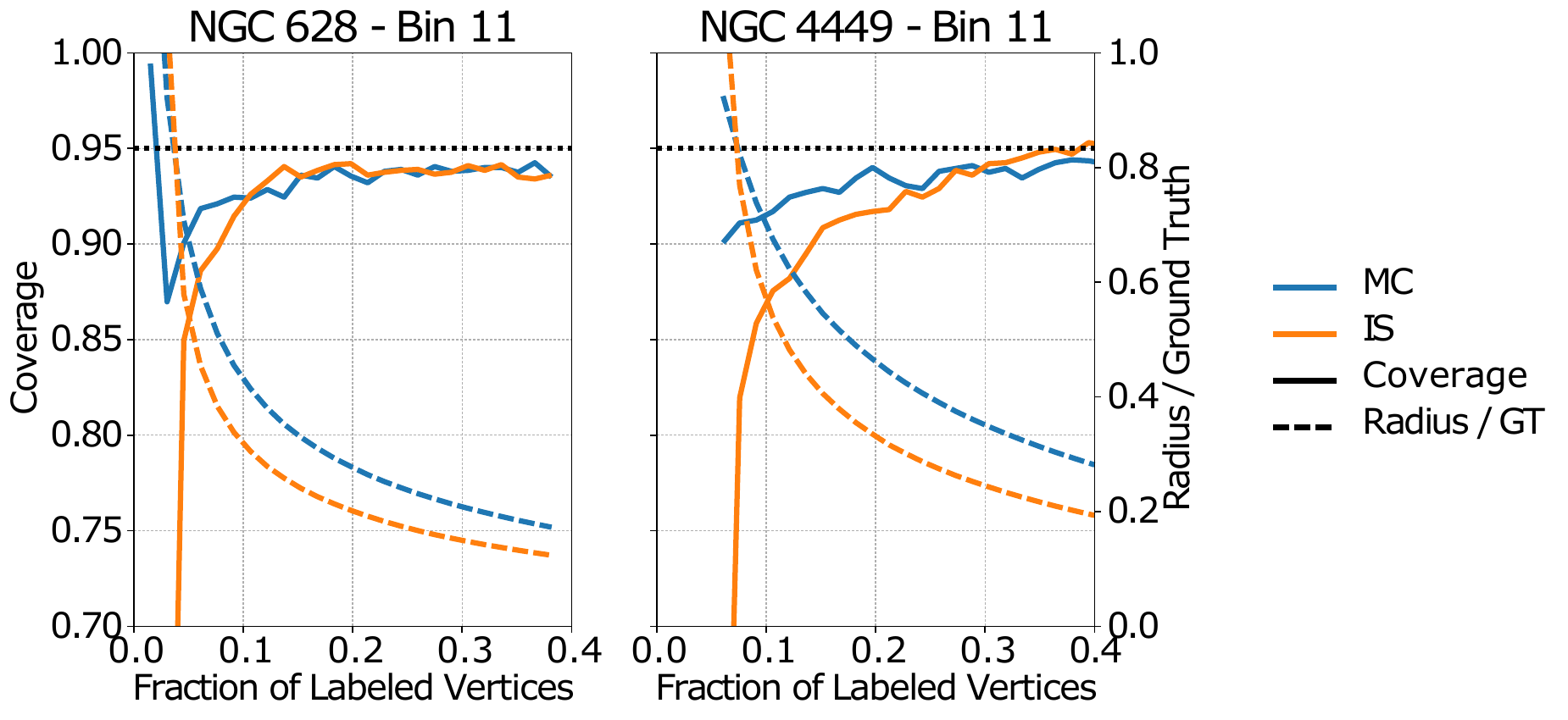}
    \includegraphics[width=0.43\linewidth]{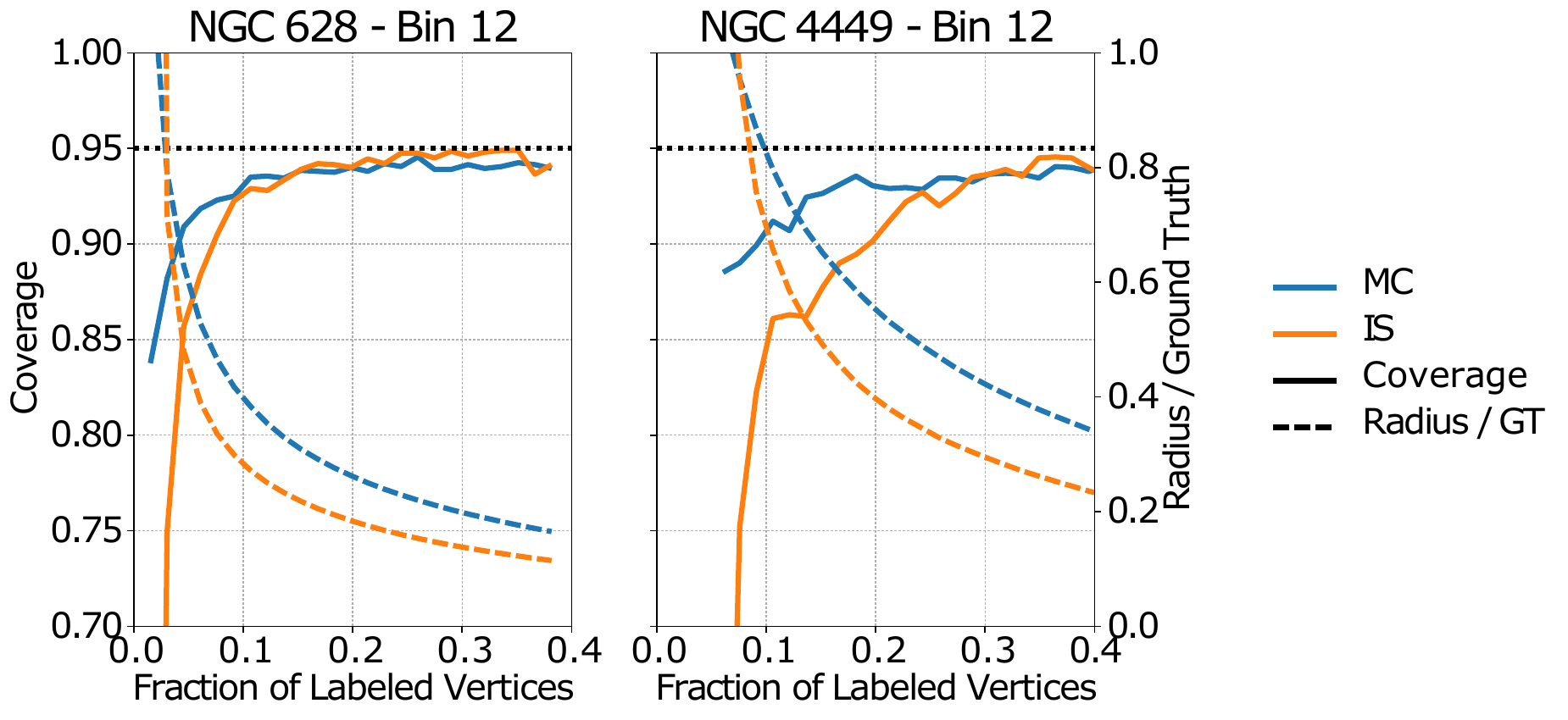}
    \includegraphics[width=0.43\linewidth]{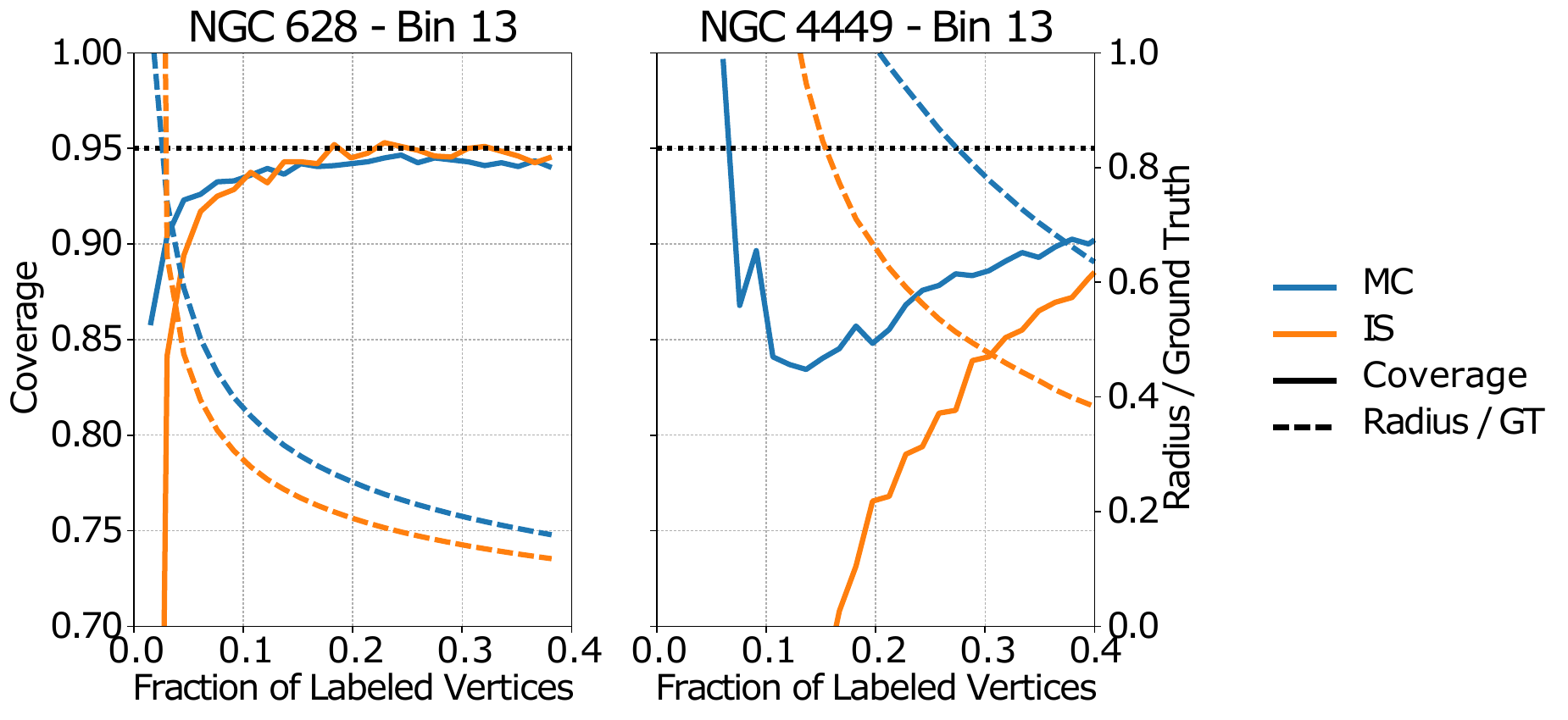}
    \includegraphics[width=0.43\linewidth]{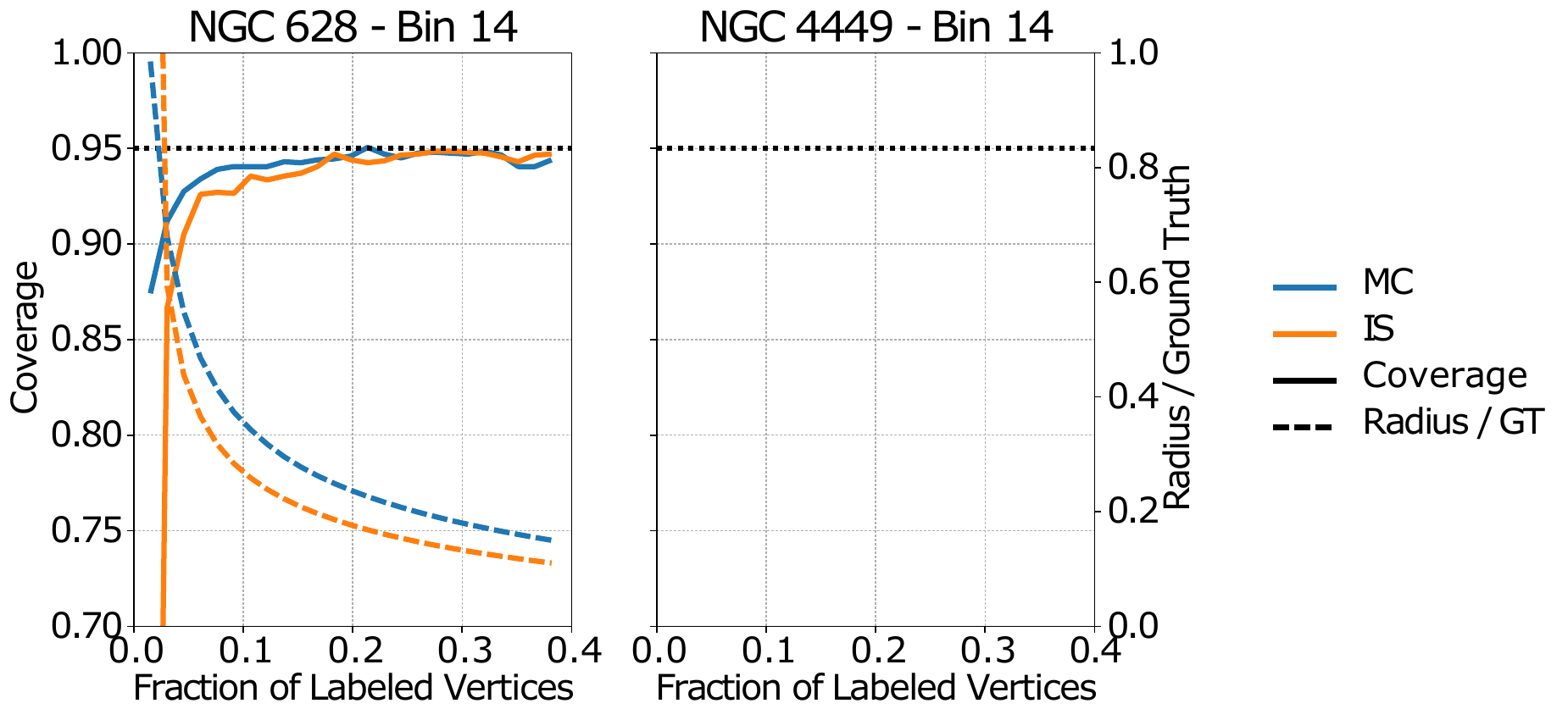}
    \label{fig:all-confidence}
\end{figure}

\vfill

\end{document}